\newcommand{\fires}{\emph{FIRES}}
\DeclareMathOperator*{\argmax}{arg\,max}
\newtheorem{property}{Property}
  \providecommand\BibTeX{{%
    \normalfont B\kern-0.5em{\scshape i\kern-0.25em b}\kern-0.8em\TeX}}}
\begin{document}

\title{Leveraging Model Inherent Variable Importance for Stable Online Feature Selection}

\author{Johannes Haug}
\affiliation{%
  \institution{University of Tuebingen}
  \streetaddress{Sand}
  \city{Tuebingen}
  \country{Germany}
}
\email{johannes-christian.haug@uni-tuebingen.de}

\author{Martin Pawelczyk}
\affiliation{%
  \institution{University of Tuebingen}
  \streetaddress{Sand}
  \city{Tuebingen}
  \country{Germany}
}
\email{martin.pawelczyk@uni-tuebingen.de}

\author{Klaus Broelemann}
\affiliation{%
  \institution{Schufa Holding AG}
  \streetaddress{Sand}
  \city{Wiesbaden}
  \country{Germany}
}
\email{klaus.broelemann@schufa.de}

\author{Gjergji Kasneci}
\affiliation{%
  \institution{University of Tuebingen}
  \streetaddress{Sand}
  \city{Tuebingen}
  \country{Germany}
}
\email{gjergji.kasneci@uni-tuebingen.de}

\fancyhead{} 

\begin{abstract}
Feature selection can be a crucial factor in obtaining robust and accurate predictions. Online feature selection models, however, operate under considerable restrictions; they need to efficiently extract salient input features based on a bounded set of observations, while enabling robust and accurate predictions. In this work, we introduce \fires, a novel framework for online feature selection. The proposed feature weighting mechanism leverages the importance information inherent in the parameters of a predictive model. By treating model parameters as random variables, we can penalize features with high uncertainty and thus generate more stable feature sets. Our framework is generic in that it leaves the choice of the underlying model to the user. Strikingly, experiments suggest that the model complexity has only a minor effect on the discriminative power and stability of the selected feature sets. In fact, using a simple linear model, \fires\ obtains feature sets that compete with state-of-the-art methods, while dramatically reducing computation time. In addition, experiments show that the proposed framework is clearly superior in terms of feature selection stability.
\end{abstract}

\begin{CCSXML}
<ccs2012>
<concept>
<concept_id>10010147.10010257.10010282.10010284</concept_id>
<concept_desc>Computing methodologies~Online learning settings</concept_desc>
<concept_significance>500</concept_significance>
</concept>
<concept>
<concept_id>10010147.10010257.10010321.10010336</concept_id>
<concept_desc>Computing methodologies~Feature selection</concept_desc>
<concept_significance>500</concept_significance>
</concept>
<concept>
<concept_id>10002950.10003648.10003688.10003696</concept_id>
<concept_desc>Mathematics of computing~Dimensionality reduction</concept_desc>
<concept_significance>300</concept_significance>
</concept>
<concept>
<concept_id>10002951.10002952.10002953.10010820.10003208</concept_id>
<concept_desc>Information systems~Data streams</concept_desc>
<concept_significance>300</concept_significance>
</concept>
<concept>
<concept_id>10002951.10002952.10002953.10010820.10010821</concept_id>
<concept_desc>Information systems~Uncertainty</concept_desc>
<concept_significance>100</concept_significance>
</concept>
</ccs2012>
\end{CCSXML}

\ccsdesc[500]{Computing methodologies~Online learning settings}
\ccsdesc[500]{Computing methodologies~Feature selection}
\ccsdesc[300]{Mathematics of computing~Dimensionality reduction}
\ccsdesc[300]{Information systems~Data streams}
\ccsdesc[100]{Information systems~Uncertainty}

\keywords{feature selection; data streams; stability; uncertainty}


\maketitle

\section{Introduction}
Online feature selection has been shown to improve the predictive quality in high-dimensional streaming applications. Aiming for real time predictions, we need online feature selection models that are both effective and efficient. Recently, we also witness a demand for interpretable and stable machine learning methods \cite{rudin2019stop}. Yet, the stability of feature selection models remains largely unexplored.

In practice, feature selection is primarily used to mitigate the so-called \emph{curse of dimensionality}. This term refers to the negative effects on the predictive model that we often observe in high-dimensional applications; such as weak generalization abilities, for example. In this context, feature selection has successfully been applied to both offline and online machine learning applications \cite{guyon2003introduction,chandrashekar2014survey,bolon2015recent,li2018feature}.

Data streams are a potentially unbounded sequence of time steps. As such, data streams preclude us from storing all observations that appear over time. Consequently, at each time step $t$, feature selection models can analyse only a subset of the data to identify relevant features. Besides, temporal dynamics, e.g. concept drift, may change the underlying data distributions and thereby shift the attentive relation of features \cite{gama2014survey}. To sustain high predictive power, online feature selection models must be flexible with respect to shifting distributions. For this reason, online feature selection usually proves to be more challenging than batch feature selection.

Online models should not only be flexible with regard to the data distribution, but also robust against small variations of the input or random noise. Otherwise, the reliability of a model may suffer. Robustness is also one of the key requirements of a report published by the European Commission \cite{hamon2020robustness}. For online feature selection, this means that we aim to avoid drastic variations of the selected features in subsequent time steps. Yet, whenever a data distribution changes, we must adjust the feature set accordingly. Only few authors have examined the stability of feature selection models \cite{kalousis2005stability,nogueira2017stability,barddal2019boosting}, which leaves plenty of room for further investigation.

Ideally, we aim to uncover a stable set of discriminative features at every time step $t$. Feature selection stability, e.g. defined by \cite{nogueira2017stability}, usually corresponds to a low variation of the selected feature set. We could reduce the variation and thereby maximize the stability of a feature set by selecting only those features we are certain about. Still, we aim to select a feature set that is highly discriminative with respect to the current data generating distribution. In order to meet both requirements, one would have to weigh the features according to their importance and uncertainty regarding the decision task at hand. We translate these considerations into three sensible properties for stable feature weighting in data streams:
\begin{property}
\label{prop:attention}
    \emph{(Attentive Weights)} Feature weights must preserve attentive relations of features. Given an arbitrary feature $x_j$, let $\mu_{tj}$ be its measured importance and let $\omega_{tj}$ be its weight at some time step $t$. The feature weight $\omega_{tj}$ must be a function of $\mu_{tj}$, such that $\mu_{tj} = 0 \Rightarrow \omega_{tj} \leq 0$.
\end{property}

Intuitively, we would expect the weight of a feature to be exactly zero, if its associated importance is zero. But since there might be dependencies between the different weights, we allow the weights in such cases to become smaller than zero, thus ensuring a higher flexibility in the possible weight configurations.

\begin{property}
\label{prop:monotonicity}
    \emph{(Monotonic Weights)} Feature weights must be a strictly monotonic function of importance and uncertainty. Given two arbitrary features $x_i \neq x_j$, let $|\mu_{ti}|, |\mu_{tj}|$ be their absolute measured importance and let $\sigma_{ti}, \sigma_{tj} \geq 0$ be the respective measure of uncertainty at some time step $t$. Two conditions must hold:
    \begin{itemize}
        \item [2.1] Given $|\mu_{ti}| = |\mu_{tj}|$, the following holds: $\sigma_{ti} \geq \sigma_{tj} \Leftrightarrow \omega_{ti} \leq \omega_{tj}$. Otherwise, if $\sigma_{ti} < \sigma_{tj}$, meaning we are more certain about feature $x_i$'s than feature $x_j$'s discriminative power, it holds that $\omega_{ti} > \omega_{tj}$, and vice versa.
        \item [2.2] Given $\sigma_{ti} = \sigma_{tj}$, the following holds: $|\mu_{ti}| \geq |\mu_{tj}| \Leftrightarrow \omega_{ti} \geq \omega_{tj}$. Otherwise, if $|\mu_{ti}| < |\mu_{tj}|$, meaning that feature $x_i$ is less discriminative than feature $x_j$, it holds that $\omega_{ti} < \omega_{tj}$, and vice versa.
    \end{itemize}
\end{property}

The second property specifies that features with high importance and low uncertainty must be given a higher weight than features with low importance and high uncertainty.

\begin{property}
\label{prop:consistency}
    \emph{(Consistent Weights)} For a stable target distribution, feature weights must eventually yield a consistent ranking. Let $\mathcal{R}(\omega_t)$ be the ranking of features according to their weights at time step $t$. Assume $\exists \bar{t}$, such that $P(y_t|x_t) = P(y_{t+1}|x_{t+1})$ $\forall t \geq \bar{t}$. As $t \geq \bar{t} \rightarrow \infty$, it holds that $\mathcal{R}(\omega_t) = \mathcal{R}(\omega_{t-1})$.
\end{property}

Consistent weights eventually yield a stable ranking of features, if the conditional target distribution does not change anymore.

\begin{figure}[t]
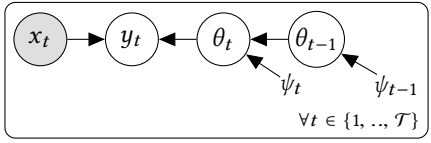

  \centering
  \tikz{
    \node[latent] (y) {$y_t$};
    \node[obs, left=0.5cm of y] (x) {$x_t$};
    \node[latent, right=0.5cm of y] (theta) {$\theta_t$};
    \node[const, below right= 0.2cm and 0.5cm of theta] (psi) {$\psi_t$};
    \node[latent, right=0.5cm of theta] (theta2) {$\theta_{t-1}$};
    \node[const, below right= 0.2cm and 0.5cm of theta2] (psi2) {$\psi_{t-1}$};
    \edge {x} {y};
    \edge {theta} {y};
    \edge {psi} {theta};
    \edge {theta2} {theta};
    \edge {psi2} {theta2};
    \plate {plate} {(y)(x)(theta)(psi)(theta2)(psi2)} {$\forall t \in \{1,..,\mathcal{T}\}$};
  }
  \caption{\emph{Graphical Model}. The target variable $y_t$  at time step $t \in \{1,..,\mathcal{T}\}$ depends on a feature vector $x_t$ (observed variable = shaded grey) and model parameters $\theta_t$. We treat the parameters $\theta_t$ as random variables that are parameterised by $\psi_t$, which in turn contains information about feature importance and uncertainty. We update the distribution of $\theta$ at every time step $t$ with respect to the new observations.}
  \label{fig:graphical_model}
\end{figure}

These three properties can help guide the development of robust feature weighting schemes. To the best of our knowledge, they are also the first formal definition of valuable properties for stable feature weighting in data streams.

In order to fulfill the properties specified before, we need a measure of feature importance and corresponding uncertainty. If a predictive model $\mathcal{M}_\theta$ is trained on the input features, we expect its parameters $\theta$ to contain the required information. Specifically, we can extract the latent importance and uncertainty of features regarding the prediction at time step $t$, by treating $\theta_t$ as a random variable. Accordingly, $\theta_t$ is parameterised by $\psi_t$, which contains the sufficient statistics. Given that all parameters initially follow the same distribution, we can optimize $\psi_t$ for every new observation using gradient updates (e.g. stochastic gradient ascent). If we update $\psi_t$ at every time step, the parameters contain the most current information about the importance and uncertainty of input features. These considerations translate into the graphical model in Figure \ref{fig:graphical_model} and form the basis of a novel framework for \emph{\underline{F}ast, \underline{I}nterpretable and \underline{R}obust feature \underline{E}valuation and \underline{S}election} (\fires). \fires\ selects features with high importance, penalizing high uncertainty, to generate a feature set that is both discriminative and stable.

In summary, the contributions of this work are:
\begin{enumerate}[label=(\alph*)]
    \item A specification of sensible properties which, when fulfilled, help to create more reliable feature weights in data streams.
    \item A flexible and generic framework for online feature weighting and selection that satisfies the proposed properties.
    \item A concrete application of the proposed framework to three common model families: Generalized Linear Models (GLM) \cite{nelder1972generalized}, Artificial Neural Nets (ANN) and Soft Decision Trees (SDT) \cite{frosst2017distilling,irsoy2012soft} (an open source implementation can be found at \url{https://github.com/haugjo/fires}).
    \item An evaluation on several synthetic and real-world data sets, which shows that the proposed framework is superior to existing work in terms of speed, robustness and predictive accuracy.
\end{enumerate}

The remainder of this paper is organized as follows: We introduce the general objective and feature weighting scheme of our framework in Section \ref{sec:fires_framework}. Here, we also show that \fires~ produces attentive, monotonic and consistent weights as defined by the Properties \ref{prop:attention} to \ref{prop:consistency}. We describe three explicit specifications of \fires~in Section \ref{sec:fires_mechanics}. In Section \ref{sec:stability}, we show how feature selection stability can be evaluated in streaming applications. Finally, we cover related work in Section \ref{sec:related_work} and evaluate our framework in a series of experiments in Section \ref{sec:experiments}.

\begin{table}[t]
\caption{Important Variables and Notation}
\label{tab:notation}
\centering
    \begin{adjustbox}{max width=\columnwidth}
        \begin{tabular}{cll}
        \toprule
        \textbf{Notation} & \textbf{Description}\\ 
        \cmidrule(lr){1-1} \cmidrule(lr){2-3}
        $t \in \{1,..,\mathcal{T}\}$ & \multicolumn{2}{p{5cm}}{Time step.} \\
        $x_t \in \mathbb{R}^{B \times J}$ & \multicolumn{2}{p{5cm}}{Observations at time step $t$ with $J$ features and a batch size of $B$.}\\
        $y_t = [y_{t1},..,y_{tB}]$ & \multicolumn{2}{p{5cm}}{Target variable at time step $t$.}\\
        $\theta_t = [\theta_{t1},..,\theta_{tk},..,\theta_{tK}]$ & \multicolumn{2}{p{5cm}}{Parameters of a model $\mathcal{M}_{\theta_t}$; $K \geq J$.}\\
        $P(\theta_{tk}|\psi_{tk})$ & \multicolumn{2}{p{5cm}}{Probability distribution of $\theta_{tk}$, parameterised by $\psi_{tk}$.}\\
        $\omega_t = [\omega_{t1},..,\omega_{tj},..,\omega_{tJ}]$ & \multicolumn{2}{p{5cm}}{Feature weights at time step $t$.}\\
        $M \in \mathbb{N}$ & \multicolumn{2}{p{5cm}}{Number of selected features; $M \leq J$.}\\
        \bottomrule
        \end{tabular}
    \end{adjustbox}
\end{table}

\begin{figure*}[ht]
    \begin{tikzpicture}[x=1pt,y=1pt,yscale=-0.6,xscale=0.6]

    \draw [fill={rgb, 255:red, 155; green, 155; blue, 155 }  ,fill opacity=0.4 ]   (292.5,39.04) .. controls (314.61,41.52) and (311.56,13.69) .. (323.38,14) .. controls (335.2,14.31) and (330.63,41.82) .. (353.5,39.35) ;
    \draw [fill={rgb, 255:red, 155; green, 155; blue, 155 }  ,fill opacity=0.4 ] [dash pattern={on 4.5pt off 4.5pt}]  (323.38,14) -- (323.38,39.04) ;
    
    \draw    (72.5,80) -- (72.5,44) ;
    \draw [shift={(72.5,41)}, rotate = 450] [fill={rgb, 255:red, 0; green, 0; blue, 0 }  ][line width=0.08]  [draw opacity=0] (8.93,-4.29) -- (0,0) -- (8.93,4.29) -- cycle    ;
    \draw    (72.5,150) -- (72.5,112) ;
    \draw [shift={(72.5,109)}, rotate = 450] [fill={rgb, 255:red, 0; green, 0; blue, 0 }  ][line width=0.08]  [draw opacity=0] (8.93,-4.29) -- (0,0) -- (8.93,4.29) -- cycle    ;
    \draw    (163,10.5) -- (153,10.5) -- (153,171) -- (163,171) ;
    \draw [fill={rgb, 255:red, 155; green, 155; blue, 155 }  ,fill opacity=0.4 ]   (292.5,100.04) .. controls (318.5,102) and (315.87,59.95) .. (322.75,59.95) .. controls (329.63,59.95) and (326.5,102) .. (353.5,100.35) ;
    \draw  [dash pattern={on 4.5pt off 4.5pt}]  (323.38,59.95) -- (323.38,100.04) ;
    \draw [fill={rgb, 255:red, 155; green, 155; blue, 155 }  ,fill opacity=0.4 ]   (292.5,161.04) .. controls (312.25,162.45) and (305.87,140.7) .. (323.25,140.95) .. controls (340.63,141.2) and (333.75,161.95) .. (353.5,161.35) ;
    \draw [fill={rgb, 255:red, 155; green, 155; blue, 155 }  ,fill opacity=0.4 ] [dash pattern={on 4.5pt off 4.5pt}]  (323.38,140.95) -- (323.38,161.04) ;
    
    \draw  [dash pattern={on 0.84pt off 2.51pt}]  (633,58.5) -- (633,79.5) ;
    \draw  [dash pattern={on 0.84pt off 2.51pt}]  (633,105.5) -- (633,125.5) ;
    \draw    (622,30.25) -- (613,30.25) -- (613,150.25) -- (623,150.25) ;
    \draw    (642.5,150.02) -- (652.5,150.02) -- (652.5,31) -- (643.5,31) ;
    \draw  [dash pattern={on 0.84pt off 2.51pt}]  (252.5,51) -- (252.5,70) ;
    \draw  [dash pattern={on 0.84pt off 2.51pt}]  (252.5,111) -- (252.5,130) ;
    \draw  [dash pattern={on 4.5pt off 4.5pt}]  (102.5,80) -- (153,10.5) ;
    \draw  [dash pattern={on 4.5pt off 4.5pt}]  (102.5,109) -- (153,171) ;
    \draw  [fill={rgb, 255:red, 155; green, 155; blue, 155 }  ,fill opacity=0.4 ] (42.5,80) -- (102.5,80) -- (102.5,109) -- (42.5,109) -- cycle ;
    \draw    (352.72,170.3) -- (362.72,170.34) -- (363.39,9.85) -- (353.39,9.8) ;
    \draw    (363,89.75) -- (390,90.2) ;
    \draw [shift={(393,90.25)}, rotate = 180.95] [fill={rgb, 255:red, 0; green, 0; blue, 0 }  ][line width=0.08]  [draw opacity=0] (8.93,-4.29) -- (0,0) -- (8.93,4.29) -- cycle    ;
    \draw    (403,9.75) -- (393,9.75) -- (393,90.25) -- (393,170.25) -- (403,170.25) ;
    \draw    (463.72,170.8) -- (473.72,170.84) -- (474.39,10.35) -- (464.39,10.3) ;
    \draw  [dash pattern={on 0.84pt off 2.51pt}]  (432.75,50.5) -- (432.75,70) ;
    \draw  [dash pattern={on 0.84pt off 2.51pt}]  (433.25,110.5) -- (433.25,130) ;
    \draw    (474.17,90) -- (500,90.22) ;
    \draw [shift={(503,90.25)}, rotate = 180.5] [fill={rgb, 255:red, 0; green, 0; blue, 0 }  ][line width=0.08]  [draw opacity=0] (8.93,-4.29) -- (0,0) -- (8.93,4.29) -- cycle    ;
    \draw    (513,9.75) -- (503,9.75) -- (503,90.25) -- (503,170.25) -- (513,170.25) ;
    \draw    (562.38,170.3) -- (572.38,170.34) -- (573.05,9.85) -- (563.05,9.8) ;
    \draw  [dash pattern={on 0.84pt off 2.51pt}]  (536.83,50.17) -- (536.83,70.17) ;
    \draw  [dash pattern={on 0.84pt off 2.51pt}]  (536.83,111.33) -- (536.83,131.33) ;
    \draw  [dash pattern={on 4.5pt off 4.5pt}]  (573.05,9.85) -- (613,30.25) ;
    \draw  [dash pattern={on 4.5pt off 4.5pt}]  (572.38,170.34) -- (613,150.25) ;
    
    \draw (73.5,30.6) node [anchor=south] [inner sep=0.75pt]    {$P( y|x,\theta )$};
    \draw (72.5,153.4) node [anchor=north] [inner sep=0.75pt]    {$[ y,x_{1} ,...,x_{j} ,...,x_{J}]$};
    \draw (225,23.4) node [anchor=north] [inner sep=0.75pt]    {$\theta _{1} \sim \mathcal{N}( \mu _{1} ,\sigma _{1}) =$};
    \draw (224,83.4) node [anchor=north] [inner sep=0.75pt]    {$\theta _{k} \sim \mathcal{N}( \mu _{k} ,\sigma _{k}) =$};
    \draw (225,143.4) node [anchor=north] [inner sep=0.75pt]    {$\theta _{K} \sim \mathcal{N}( \mu _{K} ,\sigma _{K}) =$};
    \draw (74,92.5) node    {$\mathcal{M}_{\mathbf{\theta }}$};
    \draw (633.5,80.85) node [anchor=north] [inner sep=0.75pt]    {$\omega _{j}$};
    \draw (633,55.1) node [anchor=south] [inner sep=0.75pt]    {$\omega _{1}$};
    \draw (633,122.9) node [anchor=north] [inner sep=0.75pt]    {$\omega _{J}$};
    \draw (434,48.6) node [anchor=south] [inner sep=0.75pt]    {$\left(\frac{\partial P}{\partial \mu _{1}} ,\frac{\partial P}{\partial \sigma _{1}}\right)$};
    \draw (537.83,43.77) node [anchor=south] [inner sep=0.75pt]    {$( \mu '_{1} ,\sigma '_{1})$};
    \draw (434,108.1) node [anchor=south] [inner sep=0.75pt]    {$\left(\frac{\partial P}{\partial \mu _{k}} ,\frac{\partial P}{\partial \sigma _{k}}\right)$};
    \draw (434,168.6) node [anchor=south] [inner sep=0.75pt]    {$\left(\frac{\partial P}{\partial \mu _{K}} ,\frac{\partial P}{\partial \sigma _{K}}\right)$};
    \draw (537.83,103.77) node [anchor=south] [inner sep=0.75pt]    {$( \mu '_{k} ,\sigma '_{k})$};
    \draw (537.83,163.77) node [anchor=south] [inner sep=0.75pt]    {$( \mu '_{K} ,\sigma '_{K})$};

    \end{tikzpicture}
    \caption{\emph{The FIRES Framework.} By treating the parameters of a model $\mathcal{M}_\theta$ as random variables, the proposed framework is able to extract the importance and uncertainty of every input feature with respect to the prediction. For illustration, let $\theta_k~\forall k$ be normally distributed parameters. \fires\ optimizes the mean $\mu_k$ (importance) and standard deviation $\sigma_k$ (uncertainty) of all $K$ parameters, by using gradient updates. Based on the updated parameters, \fires\ then computes feature weights $\omega_j~\forall j$.}
    \label{fig:fires_mechanics}
\end{figure*}
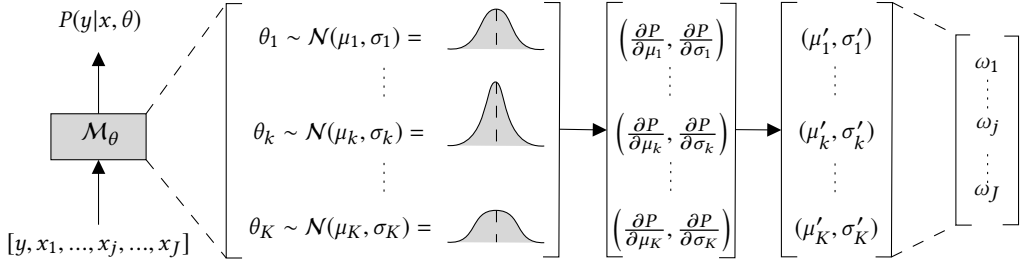

\section{The FIRES Framework}
\label{sec:fires_framework}
The parameters of a predictive model encode every input feature's importance in the prediction. By treating model parameters as random variables, we can quantify the importance and uncertainty of each feature. These estimates can then be used for feature weighting at every time step. This is the core idea of the proposed framework \fires, which is illustrated in Figure \ref{fig:fires_mechanics}. Table \ref{tab:notation} introduces relevant notation. Let $\theta$ be a vector of model parameters whose distribution is parameterised by $\psi$. Specifically, for every parameter $\theta_k$ we choose a distribution, so that $\psi_k$ comprises an importance and uncertainty measure regarding the predictive power of $\theta_k$. We then look for the $K$ distribution parameters that optimize the prediction.  
\\\\
\textbf{\emph{General Objective:}} We translate these considerations into an objective: Find the distribution parameters $\psi$ that maximize the log-likelihood given the observed data, i.e.
\begin{equation}
\label{eq:argmax_likelihood}
    \argmax_{\Psi_\mathcal{T}} \mathcal{L}(\Psi_\mathcal{T}, Y_\mathcal{T}, X_\mathcal{T}) = \argmax_{\Psi_\mathcal{T}} \sum^\mathcal{T}_{t=1} log\, P(y_t|x_t,\psi_t),
\end{equation}
with observations $x_t$ and corresponding labels $y_t$. Note that we optimize the logarithm of the likelihood, because it is easier to compute. By the nature of data streams, we never have access to the full data set before time step $\mathcal{T}$. Hence, we cannot compute \eqref{eq:argmax_likelihood} in closed form. Instead, we optimize $\psi$ incrementally using stochastic gradient ascent. Alternatively, one could also use online variational Bayes \cite{broderick2013streaming} to infer posterior parameters. However, gradient based optimization is very efficient, which can be a considerable advantage in data stream applications. The gradient of the log-likelihood with respect to $\psi_t$ is
\begin{equation}
\label{eq:gradient_likelihood}
    \nabla_{\psi_t} \mathcal{L} = \frac{1}{P(y_t|x_t,\psi_t)} \nabla_{\psi_t} P(y_t|x_t,\psi_t),
\end{equation}
with the marginal likelihood
\begin{align}
\label{eq:gradient_marginal}
    P(y_t|x_t,\psi_t) = \int P(y_t|x_t,\theta_t) \: P(\theta_t| \psi_t) \: d\theta_t.
\end{align}
We update $\psi_t$ with a learning rate $\alpha$ in iterations of the form:
\begin{equation}
\label{eq:gradient_ascent}
    \psi_t' = \psi_t + \alpha \nabla_{\psi_t} \mathcal{L}
\end{equation}
\\
\textbf{\emph{Feature Weighting Scheme:}} Given the updated distribution parameters, we can compute feature weights in a next step. Note that we may have a one-to-many mapping between input features and model parameters, depending on the predictive model at hand. In this case, we have to aggregate relevant parameters, which we will show in Section \ref{sec:aggregation}. In the following, we assume that there is a single (aggregated) parameter per input feature. Let $\mu_t, \sigma_t$ be the estimated importance and uncertainty of features at time step $t$. Our goal is to maximize the feature weights $\omega_t$ whenever a feature is of high importance and to minimize the weights under high uncertainty. In this way, we aim to obtain optimal feature weights that are both discriminative and stable. We express this trade-off in an objective function:
\begin{align}
\label{eq:weight_objective}
    &\argmax_{\omega_\mathcal{T}} \sum^\mathcal{T}_{t=1} \bigg(\underbrace{\sum^J_{j=1} \omega_{tj} \mu^2_{tj}}_{\text{importance}} - \underbrace{\lambda_s \sum^J_{j=1} \omega_{tj} \sigma^2_{tj}}_{\text{uncertainty}} - \underbrace{\lambda_r \sum^J_{j=1} \omega_{tj}^2}_{\text{regularizer}} \bigg) \nonumber\\
    = &\argmax_{\omega_\mathcal{T}} \sum^\mathcal{T}_{t=1} \sum^J_{j=1} \omega_{tj} (\mu_{tj}^2 - \lambda_s \sigma^2_{tj} - \lambda_r \omega_{tj})
\end{align}
Note that we regularize the objective with the squared $\ell2$-norm to obtain small feature weights. Besides, we specify two scaling factors $\lambda_s \geq 0$ and $\lambda_r \geq 0$, which scale the uncertainty penalty and regularization term, respectively. These scaling factors allow us to adjust the sensitivity of the weighting scheme with respect to both penalties. For example, if we have a critical application that requires high robustness (e.g. in medicine), we can increase $\lambda_s$ to impose a stronger penalty on uncertain parameters. Choosing an adequate $\lambda_s$ is usually not trivial. In general, a larger $\lambda_s$ improves the robustness of the feature weights, but limits the flexibility of the model in the face of concept drift.

From now on, we omit time indices to avoid overloading the exposition, e.g. $\omega_t = \omega$. The considerations that follow account for a single time step $t$. We further assume a batch size of $B=1$. To maximize \eqref{eq:weight_objective} for some $\omega_j$, we evaluate the partial derivative at zero:
\begin{align}
\label{eq:omega}
    \frac{\partial}{\partial \omega_j} &= \mu^2_j - \lambda_s \sigma^2_j - 2\lambda_r\omega_j \overset{!}{=} 0 \nonumber\\
    \Leftrightarrow~ -2\lambda_r\omega_j &= -\mu^2_j + \lambda_s \sigma^2_j \nonumber\\
    \Leftrightarrow~ \omega^*_j &= \frac{1}{2\lambda_r} \left( \mu^2_j - \lambda_s \sigma^2_j\right)
\end{align}
In accordance with Property \ref{prop:attention} to \ref{prop:consistency}, we show that the weights obtained from \eqref{eq:omega} are attentive, monotonic and consistent:
\begin{lemma}
    Equation \eqref{eq:omega} produces attentive weights as specified by Property \ref{prop:attention}.
\end{lemma}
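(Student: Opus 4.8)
The plan is to verify the two requirements embedded in Property~\ref{prop:attention} directly from the closed-form expression for the optimal weight in Equation~\eqref{eq:omega}. Property~\ref{prop:attention} asks, first, that $\omega_j^*$ be a function of $\mu_j$, and second, that $\mu_j = 0$ imply $\omega_j^* \leq 0$. Both follow by inspection and a single substitution, so the proof is essentially a sign check rather than a substantive argument.

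First I would observe that the requirement of functional dependence is immediate: the expression $\omega_j^* = \frac{1}{2\lambda_r}\left(\mu_j^2 - \lambda_s \sigma_j^2\right)$ is manifestly a (quadratic) function of $\mu_j$, so the first clause of Property~\ref{prop:attention} holds with no further work. Next, to check the implication $\mu_j = 0 \Rightarrow \omega_j^* \leq 0$, I would substitute $\mu_j = 0$ into \eqref{eq:omega}, obtaining
\begin{equation*}
    \omega_j^* = \frac{1}{2\lambda_r}\left(0 - \lambda_s \sigma_j^2\right) = -\frac{\lambda_s \sigma_j^2}{2\lambda_r}.
\end{equation*}

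It then remains to confirm the sign of this quantity, which is where the only (very mild) obstacle lies: one must invoke the standing assumptions on the constants. Since $\sigma_j^2 \geq 0$ (it is a squared measure of uncertainty), $\lambda_s \geq 0$ by the stated constraint on the uncertainty-scaling factor, and $\lambda_r > 0$ so that the division in \eqref{eq:omega} is well defined, the numerator $\lambda_s \sigma_j^2$ is non-negative and the overall expression is non-positive, giving $\omega_j^* \leq 0$. This establishes the second clause of Property~\ref{prop:attention} and completes the argument. The interpretive point worth flagging (consistent with the discussion following Property~\ref{prop:attention}) is that the weight need not be exactly zero when $\mu_j = 0$: the residual uncertainty penalty $-\lambda_s \sigma_j^2 / (2\lambda_r)$ can drive it strictly negative, which is precisely the flexibility the property was designed to allow.
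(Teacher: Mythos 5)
Your proof is correct and follows essentially the same route as the paper's: substitute $\mu_j = 0$ into the closed form \eqref{eq:omega} and check signs using $\sigma_j^2 \geq 0$ and $\lambda_s, \lambda_r \geq 0$. If anything, you are slightly more careful than the paper in noting that $\lambda_r$ must be strictly positive for the division in \eqref{eq:omega} to be well defined.
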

\begin{proof}
    This property follows immediately from \eqref{eq:omega}. Since $\sigma^2_j \geq 0$ and $\lambda_s, \lambda_r \geq 0$, for $\mu_j = 0$ we get $\omega_j \leq 0$.
\end{proof}
\begin{lemma}
    Equation \eqref{eq:omega} produces monotonic weights as specified by Property \ref{prop:monotonicity}.
\end{lemma}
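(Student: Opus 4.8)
The plan is to work directly from the closed form in \eqref{eq:omega} and compare any two features by examining the sign of the difference $\omega_i - \omega_j$. Substituting \eqref{eq:omega} for each feature gives
\begin{equation*}
\omega_i - \omega_j = \frac{1}{2\lambda_r}\left[(\mu_i^2 - \mu_j^2) - \lambda_s(\sigma_i^2 - \sigma_j^2)\right].
\end{equation*}
The key observations I would record up front are that $\mu_j^2 = |\mu_j|^2$, so the importance contribution is governed entirely by the absolute importances that Property \ref{prop:monotonicity} refers to, and that $\sigma \mapsto \sigma^2$ is strictly increasing on $[0,\infty)$, where the uncertainties $\sigma_{tj}$ live. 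I also note that well-definedness of \eqref{eq:omega} requires $\lambda_r > 0$ and that $\lambda_s \geq 0$; these sign facts are exactly what let me read off the direction of each inequality from the display above.

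For condition 2.1 I would fix $|\mu_i| = |\mu_j|$, so $\mu_i^2 = \mu_j^2$ and the importance term vanishes, leaving $\omega_i - \omega_j = -\tfrac{\lambda_s}{2\lambda_r}(\sigma_i^2 - \sigma_j^2)$. Since the prefactor is nonnegative and squaring is monotone on the nonnegatives, $\sigma_i \geq \sigma_j$ forces $\sigma_i^2 \geq \sigma_j^2$ and hence $\omega_i \leq \omega_j$, with the reverse implication following symmetrically and the strict case $\sigma_i < \sigma_j$ giving $\omega_i > \omega_j$. For condition 2.2 I would instead fix $\sigma_i = \sigma_j$, annihilating the uncertainty term and leaving $\omega_i - \omega_j = \tfrac{1}{2\lambda_r}(|\mu_i|^2 - |\mu_j|^2)$, from which $|\mu_i| \geq |\mu_j| \Leftrightarrow \omega_i \geq \omega_j$ (and the strict version when the absolute importances differ) is immediate.

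There is essentially no hard step: the lemma falls out of the linearity of \eqref{eq:omega} in $\mu^2$ and $\sigma^2$ combined with the monotonicity of squaring. The only point that warrants care is the passage between the absolute values in Property \ref{prop:monotonicity} and the squared quantity $\mu_j^2$ appearing in \eqref{eq:omega}; because $\mu_j^2 = |\mu_j|^2$ this is harmless, and one must simply hold $\lambda_s \geq 0$ fixed in the first comparison and keep $\lambda_r > 0$ throughout so that the sign of each prefactor is unambiguous.
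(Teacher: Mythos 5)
Your proof is correct and takes essentially the same route as the paper: both arguments reduce each sub-criterion of Property \ref{prop:monotonicity} to the sign of the relevant term in \eqref{eq:omega}, using $\mu_j^2 = |\mu_j|^2$, the monotonicity of squaring on $[0,\infty)$, and the nonnegativity of $\lambda_s/(2\lambda_r)$ and $1/(2\lambda_r)$. Your phrasing via the difference $\omega_i - \omega_j$ is a cosmetic rearrangement of the paper's chained equivalences (and your explicit remark that $\lambda_r > 0$ is needed for \eqref{eq:omega} to be well-defined is, if anything, slightly more careful than the paper's $\lambda_r \geq 0$).
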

\begin{proof}
    Given two features $x_i \neq x_j$, Property \ref{prop:monotonicity} specifies two sub-criteria, which we proof independently:
    \begin{itemize}
        \item [$\boldsymbol{>}$] Given $|\mu_i| = |\mu_j|$, we can show $\sigma_i \geq \sigma_j \Leftrightarrow \omega_i \leq \omega_j$:
        \begin{align*}
            \sigma_i &\geq \sigma_j\\
            \Leftrightarrow c - b\sigma_i^2 &\leq c - b\sigma_j^2; \:\:\:  b,c \geq 0
        \end{align*}
        For $c=\frac{1}{2\lambda_r}\mu^2_i=\frac{1}{2\lambda_r}\mu^2_j$ and $b=\frac{\lambda_s}{2\lambda_r}$ we get
        \begin{align*}
            \Leftrightarrow \frac{1}{2\lambda_r}\mu^2_i - \frac{\lambda_s}{2\lambda_r} \sigma^2_i &\leq \frac{1}{2\lambda_r}\mu^2_j - \frac{\lambda_s}{2\lambda_r} \sigma^2_j\\
            \Leftrightarrow \omega_i &\leq \omega_j
        \end{align*}
        
        \item [$\boldsymbol{>}$] Given $\sigma_i = \sigma_j$, we can show $|\mu_i| \geq |\mu_j| \Leftrightarrow \omega_i \geq \omega_j$:
        \begin{align*}
            |\mu_i| &\geq |\mu_j|\\
            \Leftrightarrow b \mu_i^2 - c &\geq b \mu_j^2 - c; \:\:\:  b,c \geq 0
        \end{align*}
        for $b=\frac{1}{2\lambda_r}$ and $c = \frac{\lambda_s}{2\lambda_r} \sigma^2_i = \frac{\lambda_s}{2\lambda_r} \sigma^2_j$, we get
        \begin{align*}
            \Leftrightarrow \frac{1}{2\lambda_r}\mu^2_i - \frac{\lambda_s}{2\lambda_r} \sigma^2_i &\geq \frac{1}{2\lambda_r}\mu^2_j - \frac{\lambda_s}{2\lambda_r} \sigma^2_j\\
            \Leftrightarrow \omega_i &\geq \omega_j
        \end{align*}
    \end{itemize}
\end{proof}
\begin{lemma}
\label{lemma:consistency}
    Equation \eqref{eq:omega} produces consistent weights as specified by Property \ref{prop:consistency}.
\end{lemma}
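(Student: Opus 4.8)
The plan is to exploit the fact that the weight map in \eqref{eq:omega} is a smooth, time-invariant function of the distribution parameters, and to reduce the claim to the convergence of the underlying optimization. Writing $\omega_j = g(\mu_j, \sigma_j) := \frac{1}{2\lambda_r}(\mu_j^2 - \lambda_s \sigma_j^2)$, the whole weight vector $\omega_t$ is a fixed continuous function of $\psi_t = (\mu_t, \sigma_t)$. Hence eventual rank consistency will follow once I establish that $\psi_t$ converges as $t \to \infty$, and then argue that convergent weights induce a stable ranking.

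First I would show that stabilizing the conditional target distribution stabilizes the objective being optimized. By assumption $P(y_t|x_t) = P(y_{t+1}|x_{t+1})$ for all $t \geq \bar{t}$, so the population log-likelihood $\mathbb{E}[\log P(y|x,\psi)]$ no longer depends on $t$, and the update \eqref{eq:gradient_ascent} becomes stochastic gradient ascent on a single stationary objective. Second, I would invoke standard stochastic-approximation guarantees: under the usual Robbins--Monro conditions on the learning rate $\alpha$, unbiased gradient estimates of bounded variance, and a maximizer $\psi^*$ of the stationary objective, the iterates satisfy $\psi_t \to \psi^*$. For the concave instantiation (the GLM case) this maximizer is unique and convergence is immediate; in the general case one states convergence to the stationary set. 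Continuity of $g$ then transfers this to the weights, giving $\omega_t = g(\psi_t) \to \omega^* := g(\psi^*)$.

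Finally I would translate weight convergence into eventual rank stability. Assuming the limiting weights are pairwise distinct, set $\delta = \min_{i \neq j} |\omega_i^* - \omega_j^*| > 0$; once $\|\omega_t - \omega^*\|_\infty < \delta/2$, the componentwise order of $\omega_t$ agrees with that of $\omega^*$, so $\mathcal{R}(\omega_t) = \mathcal{R}(\omega^*)$ for all sufficiently large $t$, and in particular $\mathcal{R}(\omega_t) = \mathcal{R}(\omega_{t-1})$, which is exactly Property \ref{prop:consistency}.

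The hard part will be the convergence of the iterates $\psi_t$: this is where the real content sits, since it rests on stochastic-approximation assumptions that hold cleanly only when the log-likelihood is concave (as for the linear model) and must otherwise be phrased as convergence to a stationary point. A secondary caveat is the tie-breaking assumption in the last step — if two features share the same limiting weight, the ranking need not be unique, so I would either assume distinct limit weights or fix a deterministic tie-breaking rule to keep $\mathcal{R}$ well defined.
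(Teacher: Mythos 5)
Your proposal takes essentially the same route as the paper's proof: the stationary conditional target distribution fixes the marginal-likelihood objective via \eqref{eq:gradient_marginal}, the SGA updates \eqref{eq:gradient_ascent} then converge to a (local) optimum, and since \eqref{eq:omega} is a fixed deterministic function of $\psi$, convergence of $\psi_t$ transfers to the weights and hence to the ranking $\mathcal{R}(\omega_t)$. Your version is in fact more careful than the paper's, which asserts exact finite-time convergence ($\psi_t = \psi_{t^*}$ for all $t \geq t^*$) and silently assumes tie-free rankings, whereas you correctly replace these with asymptotic convergence under explicit stochastic-approximation conditions, continuity of the weight map, and a distinct-limit-weights (or tie-breaking) assumption.
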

\begin{proof}
    $\exists \bar{t}$, such that
    \begin{equation*}
        P(y_t|x_t, \psi_t) = P(y_{t+1}|x_{t+1},\psi_{t+1}),~\forall t \geq \bar{t},
    \end{equation*}
    which by \eqref{eq:gradient_marginal} can be formulated in terms of the marginal likelihood. Consequently, since the marginal likelihood function does not change after time step $\bar{t}$, the SGA updates in \eqref{eq:gradient_ascent} will eventually converge to a local optimum. Let $t^* \geq \bar{t}$ be the time of convergence. Notably, $t^*$ specifies the time step at which $P(\theta|\psi)$ and the distribution parameters $\psi$ have been learnt, such that $\psi_t = \psi_{t^*}~\forall t \geq t^*$. By \eqref{eq:omega}, we compute feature weights $\omega$ as a function of $\psi$. Consequently, it also holds that $\omega_t = \omega_{t^*}~\forall t \geq t^*$. For the ranking of features, denoted by $\mathcal{R}(\omega)$, this implies $\mathcal{R}(\omega_t) = \mathcal{R}(\omega_{t^*})~\forall t \geq t^*$.
\end{proof}

\subsection{Illustrating the \fires~Mechanics}
\label{sec:fires_mechanics}
The proposed framework has three variable components, which we can specify according to the requirements of the learning task at hand:
\begin{enumerate}
    \item The prior distribution of model parameters $\theta$
    \item The prior distribution of the target variable $y$
    \item The predictive model $\mathcal{M}_{\theta}$ used to compute the marginal likelihood \eqref{eq:gradient_marginal}
\end{enumerate}
The flexibility of \fires\ allows us to obtain robust and discriminative feature sets in any streaming scenario. By way of illustration, we make the following assumptions:

The prior distribution of $\theta$ must be specified so that $\psi$ contains a measure of importance and uncertainty. The Gaussian normal distribution meets this requirement. In our case, the mean value refers to the expected importance of $\theta$ in the prediction. In addition, the standard deviation measures the uncertainty regarding the expected importance. Since the normal distribution is well-explored and occurs in many natural phenomena, it is an obvious choice. Accordingly, we get $\theta_k \sim \mathcal{N}(\psi_k)~\forall k$, where $\psi_k$ comprises the mean $\mu_k$ and the standard deviation $\sigma_k$.

In general, we infer the distribution of the target variable $y$ from the data. For illustration, we assume a Bernoulli distributed target, i.e. $y \in \{-1,1\}$. Most existing work supports binary classification. The Bernoulli distribution is therefore an appropriate choice for the evaluation of our framework.

Finally, we need to choose a predictive model $\mathcal{M}_\theta$. \fires\ supports any predictive model type, as long as its parameters represent the importance and uncertainty of the input features. To illustrate this, we apply \fires\ to three common model families. 

\subsubsection{\textbf{FIRES And Generalized Linear Models}}
\label{sec:fires_probit}
Generalized Linear Models (GLM) \cite{nelder1972generalized} use a link function to map linear models of the form $\sum^J_{j=1} \theta_j x_j + \theta_{J+1}$ to a target distribution. Since we map to the Bernoulli space, we use the cumulative distribution function of the standard normal distribution, $\Phi$, which is known as a Probit link. Conveniently, we can associate each input feature with a single model parameter. Hence, by using a GLM, we can avoid the previously discussed parameter aggregation step. We discard $\theta_{J+1}$, as it is not linked to any specific input feature. With Lemma \ref{app:lemma_1} and \ref{app:lemma_2} (see Appendix) the marginal likelihood becomes
\begin{align*}
   P(y=1|x,\psi) &= \int \Phi\left(\sum^J_{j=1} \theta_j x_j\right)~ P(\theta| \psi)~ d\theta\\ 
   &= \Phi \left(\frac{1}{\rho} \sum^J_{j=1} \mu_j x_j\right); \:\:\: \rho = \sqrt{1 + \sum^J_{j=1} \sigma^2_j x^2_j}.
\end{align*}
Since $\Phi$ is symmetric, we can further generalize to 
\begin{equation*}
     P(y|x,\psi) = \Phi \left(\frac{y}{\rho} \sum^J_{j=1} \mu_j x_j\right).
\end{equation*}
We then compute the corresponding partial derivatives as
\begin{align*}
    \frac{\partial}{\partial\mu_j} P(y|x,\psi) &= \phi\left(\frac{y}{\rho} \sum^J_{i=1} \mu_i x_i\right) \cdot \frac{y}{\rho} x_j,\\
    \frac{\partial}{\partial\sigma_j}  P(y|x,\psi) &= \phi\left(\frac{y}{\rho} \sum^J_{i=1} \mu_i x_i\right) \cdot \frac{y}{-2\rho^3} 2x^2_j \sigma_j \sum^J_{i=1} \mu_i x_i,
\end{align*}
where $\phi$ is the probability density function of the standard normal distribution.

\subsubsection{\textbf{FIRES And Artificial Neural Nets}}
\label{sec:fires_ann}
In general, Artificial Neural Nets (ANN) make predictions through a series of linear transformations and nonlinear activations. Due to the nonlinearity and complexity of ANNs, we usually cannot solve the integral of \eqref{eq:gradient_marginal} in closed form. Instead, we approximate the marginal likelihood using the well-known Monte Carlo method. Let $f_\theta(x)$ be an ANN of arbitrary depth. We approximate \eqref{eq:gradient_marginal} by sampling $L$-times with Monte Carlo:
\begin{align*}
    P(y|x,\psi) &= \int f_{\theta}(x) \: P(\theta| \psi) \: d\theta\\
    &\approx \frac{1}{L} \sum^L_{l=1}f_{\theta^{(l)}}(x); \:\:\: \theta_k^{(l)} = \sigma_k r_k^{(l)} + \mu_k~\forall k
\end{align*}
Note that we apply a reparameterisation trick: By sampling $r_k^{(l)} \sim \mathcal{N}(0,1)$, we move stochasticity away from $\mu_k$ and $\sigma_k$, which allows us to compute their partial derivatives:
\begin{align*}
    \frac{\partial P(y|x,\psi)}{\partial\mu_k} &= \frac{1}{L} \sum^L_{l=1} \frac{\partial}{\partial \theta^{(l)}_k} f_{\theta^{(l)}}(x)~ \frac{\partial \theta^{(l)}_k}{\partial \mu_k} = \frac{1}{L} \sum^L_{l=1} \frac{\partial}{\partial \theta^{(l)}_k} f_{\theta^{(l)}}(x),\\
    \frac{\partial P(y|x,\psi)}{\partial\sigma_k} &= \frac{1}{L} \sum^L_{l=1} \frac{\partial}{\partial \theta^{(l)}_k}f_{\theta^{(l)}}(x)~ \frac{\partial \theta^{(l)}_k}{\partial \sigma_k} = \frac{1}{L} \sum^L_{l=1} \frac{\partial}{\partial \theta^{(l)}_k} f_{\theta^{(l)}}(x)~ r^{(l)}_k\\
\end{align*}
We obtain $\frac{\partial}{\partial \theta^{(l)}_k} f_{\theta^{(l)}}(x)$ by backpropagation.

\subsubsection{\textbf{FIRES And Soft Decision Trees}}
\label{sec:fires_sdt}
Binary decisions as in regular CART Decision Trees are not differentiable. Accordingly, we have to choose a Decision Tree model that has differentiable parameters to compute the gradient of \eqref{eq:gradient_marginal}. One such model is the Soft Decision Tree (SDT) \cite{irsoy2012soft,frosst2017distilling}. Let $n$ be the index of an inner node of the SDT. SDTs replace the binary split at $n$ with a logistic function:
\begin{equation*}
    p_n(x) = \frac{1}{1 + e^{-(\sum^J_{j=1} \theta_{nj} x_j)}}
\end{equation*}
This function yields the probability by which we choose $n$'s right child branch given $x$. Note that the logistic function is differentiable with respect to the parameters $\theta_n$. Similar to ANNs, however, we are now faced with nonlinearity and higher complexity of the model. Therefore, we approximate the marginal likelihood using Monte Carlo and the reparameterisation trick. With $f_\theta$ being an SDT, the partial derivatives of \eqref{eq:gradient_marginal} correspond to those of the ANN.

\subsubsection{\textbf{Aggregating Parameters}}
\label{sec:aggregation}
The number of model parameters $\theta = [\theta_1,..,\theta_k,..,\theta_K]$ might exceed the number of input features $x = [x_1,..,x_j,..,x_J]$, i.e. $K\geq J$. Whenever this is the case, we have to aggregate parameters, since we require a single importance and uncertainty score per input feature to compute feature weights in \eqref{eq:omega}. Next, we show how to aggregate the parameters of an SDT and an ANN to create a meaningful representation.

For SDTs the aggregation is fairly simple. Each inner node is a logistic function that comprises exactly one parameter per input feature. Let $N$ be the number of inner nodes. Accordingly, we have a total of $J \times N$ parameters in the SDT model. We aggregate the parameters associated with an input feature $x_j$ by computing the mean over all inner nodes:
\begin{equation*}
    \theta_j = \frac{1}{N} \sum^N_{n=1} \theta_{nj};\;\;\; \theta_j \sim \mathcal{N}\left(\frac{1}{N}\sum^N_{n=1}\mu_{nj},\frac{1}{N}\sum^N_{n=1}\sigma_{nj}\right)
\end{equation*}
Due to the multi-layer architecture of an ANN, its parameters are usually associated with more than one input feature. For this reason, we cannot apply the same methodology as that proposed for the SDT. Instead, for each input feature $x_j$, we aggregate all parameters that lie along $j$'s path to the output layer. Let $h$ be the index of a layer of the ANN, where $h=1$ denotes the input layer. Let further $\mathcal{U}_h^j$ be the set of all nodes of layer $h$ that belong to the path of $x_j$. We sum up the average parameters along all layers and nodes on $j$'s path to obtain a single aggregated parameter $\theta_j$:
\begin{equation*}
    \theta_j = \sum^{H-1}_{h=1} \frac{1}{|\mathcal{U}_h^j| |\mathcal{U}_{h+1}^j|} \sum_{n\in\mathcal{U}_h^j}\sum_{i\in\mathcal{U}_{h+1}^j} \theta_{ni};\\
\end{equation*}
\begin{align*}
    \theta_j \sim \mathcal{N}\Bigg(&\sum^{H-1}_{h=1} \frac{1}{|\mathcal{U}_h^j| |\mathcal{U}_{h+1}^j|} \sum_{n\in\mathcal{U}_h^j}\sum_{i\in\mathcal{U}_{h+1}^j} \mu_{ni},\\ 
    &\sum^{H-1}_{h=1} \frac{1}{|\mathcal{U}_h^j| |\mathcal{U}_{h+1}^j|} \sum_{n\in\mathcal{U}_h^j}\sum_{i\in\mathcal{U}_{h+1}^j} \sigma_{ni}\Bigg)
\end{align*}
where $H$ is the total number of layers and $|\cdot|$ is the cardinality of a set. Note that $\theta_{ni}$ is the parameter that connects node $n$ in layer $h$ to node $i$ in layer $h+1$.

\subsection{Feature Selection Stability in Data Streams}
\label{sec:stability}
In view of increasing threats such as adversarial attacks, the stability of machine learning models has received much attention in recent years \cite{goodfellow2018making}. Stability usually describes the robustness of a machine learning model against (adversarial or random) perturbations of the data \cite{turney1995bias,bousquet2002stability}. In this context, a feature selection model is considered stable, if the set of selected features does not change after we slightly perturb the input \cite{kalousis2005stability}. To measure feature selection stability, we can therefore monitor the variability of the feature set \cite{nogueira2017stability}. \citet{nogueira2017stability} developed a stability measure, which is a generalization of various existing methods. Let $\mathcal{Z} = [A_1,..,A_r]^T$ be a matrix that contains $r$ feature vectors, which we denote by $A_r \in \{0,1\}^J$. Specifically, $A_r$ is the feature vector that was obtained for the $r$'th sample, such that selected features correspond to $1$ and $0$ otherwise. Feature selection stability according to \citeauthor{nogueira2017stability} \cite{nogueira2017stability} is then defined as:
\begin{equation}
    \Gamma(\mathcal{Z}) = 1 - \frac{\frac{1}{J} \sum^J_{j=1} s^2_j}{\frac{M}{J}\left(1-\frac{M}{J}\right)},
    \label{eq:stability}
\end{equation}
where $M$ is the number of selected features and $s^2_j = \frac{r}{r-1}\hat{p}_j(1-\hat{p}_j)$ is the unbiased sample variance of the selection of feature $j$. Moreover, let $\hat{p}_j = \frac{1}{r}\sum^r_{i=1} z_{ij}$, where $z_{ij}$ denotes one element of $\mathcal{Z}$. According to \eqref{eq:stability}, the feature selection stability decreases, if the total variability $\sum^J_{j=1} s^2_j$ increases. On the other hand, if $s^2_j = 0~\forall j$, i.e. there is no variability in the selected features, the stability reaches its maximum value at $\Gamma(\mathcal{Z}) = 1$.

\citet{nogueira2017stability} show that \eqref{eq:stability} has a clean statistical interpretation. In addition, the measure can be calculated in linear time, making it a sensible choice for evaluating online feature selection models. However, to calculate \eqref{eq:stability}, we would have to sample $r$ feature vectors, which can be costly. A na\"ive but very efficient approach is to use the feature vectors in a shifting window instead. Let $A_t$ be the active feature set at time step $t$. We then define $\mathcal{Z}_t = [A_{t-r+1},...,A_t]^T$, where $r$ depicts the size of the shifting window. We compute the sample variance $s^2_j$ in the same fashion as before. However, \eqref{eq:stability} is now restricted to the observations between time step $t$ and $t-r+1$.

The shifting window approach allows us to update the stability measure for each new feature vector at each time step. However, the approach might return low stability values, when the shifting window falls within a period of concept drift. Accordingly, we should control the sensitivity of the stability measure by a sensible selection of the window size. An alternative to shifting windows are Cross-Validation based schemes recently proposed by \citet{barddal2019boosting}, based on an idea of \citet{bifet2015efficient}. 

Finally, note that \fires\ produces consistent feature rankings when the target distribution is stable (Property \ref{prop:consistency}). In this case, we will ultimately achieve zero variance in subsequent feature sets, thereby maximizing the feature selection stability according to \eqref{eq:stability}.

\section{Related Work}
\label{sec:related_work}
Traditionally, feature selection models are categorized into filters, wrappers and embedded methods \cite{chandrashekar2014survey,ramirez2017survey,guyon2003introduction}. Embedded methods merge feature selection with the prediction, filter methods are decoupled from the prediction, and wrappers use predictive models to weigh and select features \cite{ramirez2017survey}. Evidently, \fires\ belongs to the group of wrappers.

Data streams are usually defined as an unbounded sequence of observations, where all features are known in advance. This assumption is shared by most related literature. In practice, however, we often observe streaming features, i.e. features that appear successively over time. The approaches dealing with streaming features often assume that only a fixed set of observations is available. We should therefore distinguish between online feature selection methods for ``streaming observations'' and ``streaming features''. Next, we present prominent and recent works in both categories.

\textbf{Streaming Features:} \citet{zhou2006streamwise} proposed a model that selects features based on a potential reduction of error. The threshold used for feature selection is updated with a penalty method called alpha-investing. Later, \citet{wu2012online} introduced \emph{Online Streaming Feature Selection (OSFS)}, which constructs the Markov blanket of a class and gradually removes irrelevant or redundant features. Likewise, the \emph{Scalable and Accurate Online Approach (SAOLA)} removes redundant features by computing a lower bound on pairwise feature correlations \cite{yu2014towards}. Another approach is \emph{Group Feature Selection from Feature Streams (GFSFS)}, which uncovers and removes all feature groups that have a low mutual information with the target variable \cite{li2013online}. Finally, \citet{wang2018provable} introduced \emph{Online Leverage Scores for Feature Selection}, a model that selects features based on the approximate statistical leverage score.

\textbf{Streaming Observations:} An early approach is \emph{Grafting}, which uses gradient updates to iteratively adjust feature weights \cite{perkins2003grafting}. Later, \citet{nguyen2012heterogeneous} employed an ensemble model called \emph{Heterogeneous Ensemble with Feature Drift for Data Streams (HEFT)} to compute a symmetrical uncertainty measure that can be used to weigh and select features. \emph{Online Feature Selection (OFS)} is another wrapper that adjusts feature weights based on misclassifications of a Perceptron \cite{wang2013online}. OFS truncates the weight vector at each time step to retain only the top features. The \emph{Feature Selection on Data Streams (FSDS)} model by \citet{huang2015unsupervised} maintains an approximated low-rank matrix representation of all observed data. FSDS computes feature weights with a Ridge-regression model trained on the low-rank matrix. Recently, \citet{borisov2019cancelout} proposed \emph{Cancel Out}, a sparse layer for feature selection in neural nets, which exploits the gradient information obtained during training. Another recent proposal is the \emph{Adaptive Sparse Confidence-Weighted (ASCW)} model that obtains feature weights from an ensemble of sparse learners \cite{liu2019adaptive}.

Some online feature selection models can process ``streaming features'' and ``streaming observations'' simultaneously. One example is the \emph{Extremal Feature Selection (EFS)} by \citet{carvalho2006single}, which ranks features by the absolute difference between the positive and negative weights of a modified balanced Winnow algorithm. Another approach uses statistical measures like $\chi^2$ to rank and select features \cite{katakis2005utility}. Finally, there are online predictive models that offer embedded feature selection, e.g. \emph{DXMiner} \cite{masud2010classification}, as well as explanation models like \emph{LICON} \cite{kasneci2016licon}, which quantify the influence of input features. For further consultation of related work, we refer the fellow reader to the surveys of \citet{guyon2003introduction}, \citet{li2018feature} and \citet{ramirez2017survey}.

For the evaluation of our framework, we assume that all features are known in advance. Note, however, that \fires\ can also support streaming features by dynamically adding parameters to the likelihood model (we leave a detailed analysis for future work).

\section{Experiments}
\label{sec:experiments}
Next, we evaluate the \fires\ framework in multiple experiments. Specifically, we compare the three instantiations of \fires\ introduced above (see Sections \ref{sec:fires_probit} to \ref{sec:fires_sdt}). Besides, we compare our framework to three state-of-the-art online feature selection models: OFS \cite{wang2013online}, FSDS \cite{huang2015unsupervised} and EFS \cite{carvalho2006single}. The hyperparameters of each related model were selected as specified in the corresponding papers. We have also defined a set of default hyperparameters for the three \fires\ models. Details about the hyperparameter search can be found in the Appendix. We chose a binary classification context for the evaluation, as it is a basic problem that all models should handle well.

\subsection{Data Sets}
\begin{table}[t]
\caption{\emph{Data sets}. ``Types'' denotes the data types included in the data set (continuous, categorical). The synthetic RBF (Radial Basis Function) and RTG (Random Tree Generator) data sets were generated with scikit-multiflow \cite{montiel2018scikit}.}
    \label{tab:datasets}
    \centering
    \begin{adjustbox}{max width=\columnwidth}
        \begin{tabular}{llll}
        \toprule
        \textbf{Name}              & \textbf{\#Samples} & \textbf{\#Features} & \textbf{Types} \\ 
        \cmidrule(lr){1-1} \cmidrule(lr){2-4}
        HAR (one-vs-all) & 10,299            & 562               & cont.         \\ 
        Spambase                & 4,601            & 57                & cont.         \\ 
        Usenet (20 Newsgroups)  & 5,931            & 658               & cat.          \\ 
        Gisette ('03 NIPS challenge)  & 7,000            & 5,000               & cont.          \\ 
        Madelon ('03 NIPS challenge)   & 2,600            & 500               & cont.          \\
        Dota   & 100,000            & 116               & cat.          \\
        KDD Cup 1999 Data               & 100,000           & 41                & cont., cat.   \\
        MNIST (one-vs-all)                   & 70,000            & 784               & cont.          \\ 
        RBF (synthetic)       & 10,000           & 10,000               & cont.         \\ 
        RTG (synthetic) & 10,000           & 450                & cont., cat.   \\ 
        \bottomrule
        \end{tabular}
    \end{adjustbox}
\end{table}
We have limited ourselves to known data sets that are either generated by streaming applications or are closely related to them. Table \ref{tab:datasets} shows the properties of all data sets. Further information about the data sets and the preprocessing is included in the Appendix.

The Human Activity Recognition (HAR) and the MNIST data set are multiclass data sets. We have transferred both data sets into a binary classification setting as follows: HAR contains measurements of a motion sensor. The labels denote corresponding activities. For our evaluation, we treated HAR as a one-vs-all classification of the activity ``Walking''. MNIST is a popular digit recognition data set. Here, we chose the label ``3'' for a one-vs-all classification.

The Gisette and Madelon data sets were introduced as part of a NIPS feature selection challenge. We also used a data set from the 1999 KDD Cup that describes fraudulent and benign network activity. The Dota data set contains (won/lost) results of the strategy game Dota 2. Its features correspond to player information, such as rank or game character. Since KDD and Dota are fairly large data sets, we took a random sample of 100,000 observations each to compute the experiments in a reasonable time. Finally, Usenet is a streaming adaptation of the 20 Newsgroups data set and Spambase contains the specifications of various spam and non-spam emails.

Moreover, we generated two synthetic data sets with scikit-multiflow \cite{montiel2018scikit}. Specifically, we obtained 10,000 instances with 10,000 continuous features from the Random RBF Generator (RBF = Radial Basis Function). In addition, we generated another 10,000 instances with 50 categorical and 200 continuous features using the Random Tree Generator (RTG). Here, each categorical feature has five unique, one-hot-encoded values, giving a total of 450 features. We used the default hyperparameters of both generators and defined a random state for reproducibility.

\begin{figure}[t]
\centering
    \begin{subfigure}[t]{0.3\columnwidth}
     \centering
     \includegraphics[width=\textwidth]{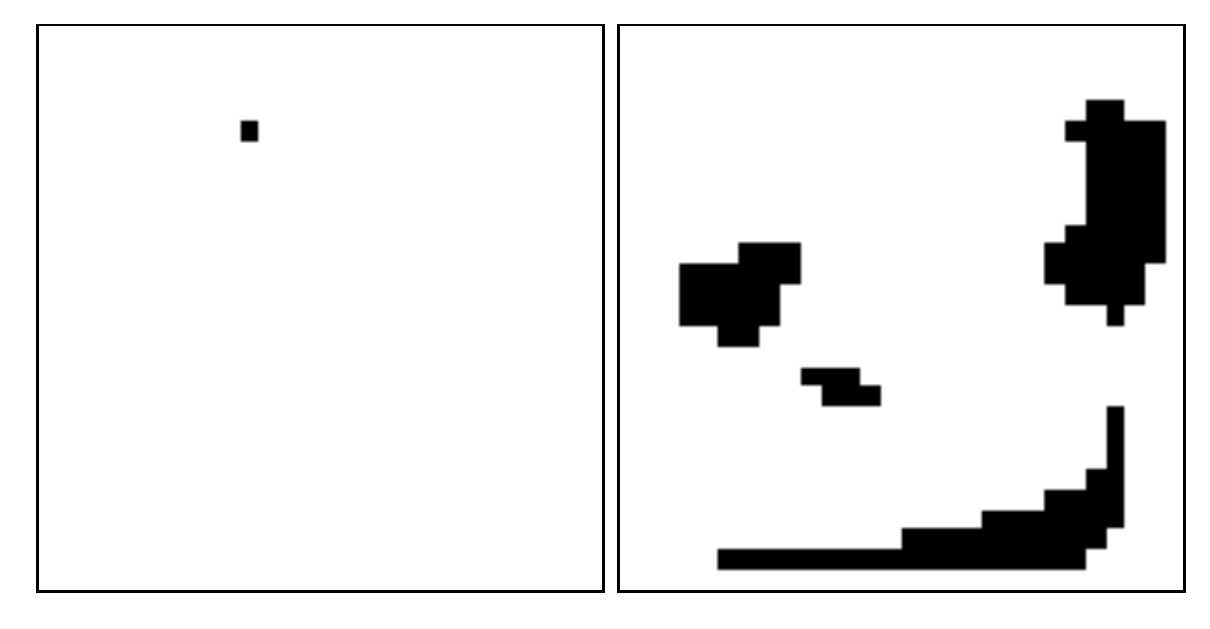}
     \caption{EFS\cite{carvalho2006single}, 3 vs. all}
    \end{subfigure}
    \hfill
    \begin{subfigure}[t]{0.3\columnwidth}
     \centering
     \includegraphics[width=\textwidth]{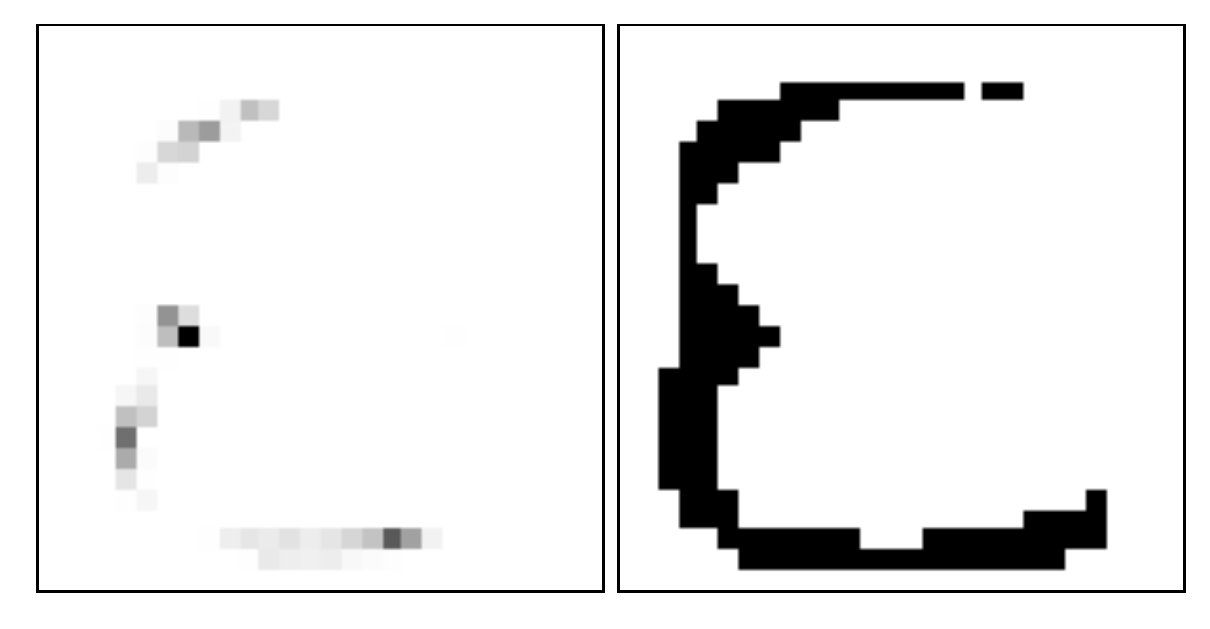}
     \caption{EFS\cite{carvalho2006single}, 8 vs. all}
    \end{subfigure}
    \hfill
    \begin{subfigure}[t]{0.3\columnwidth}
     \centering
     \includegraphics[width=\textwidth]{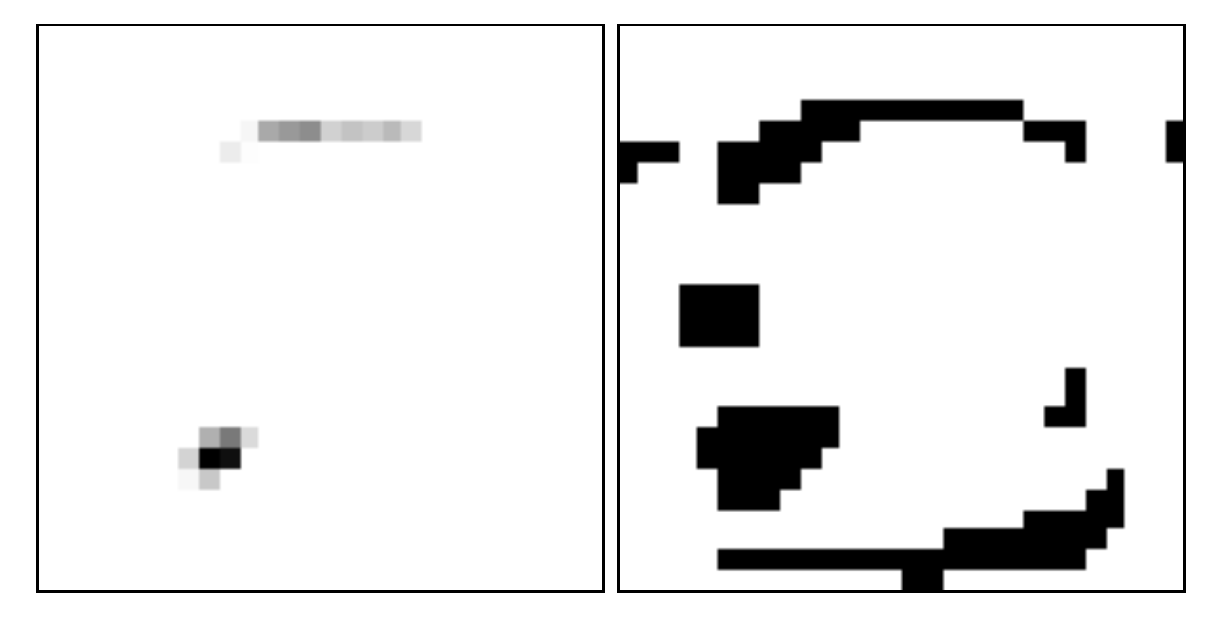}
     \caption{EFS\cite{carvalho2006single}, 9 vs. all}
    \end{subfigure}

    \begin{subfigure}[t]{0.3\columnwidth}
     \centering
     \includegraphics[width=\textwidth]{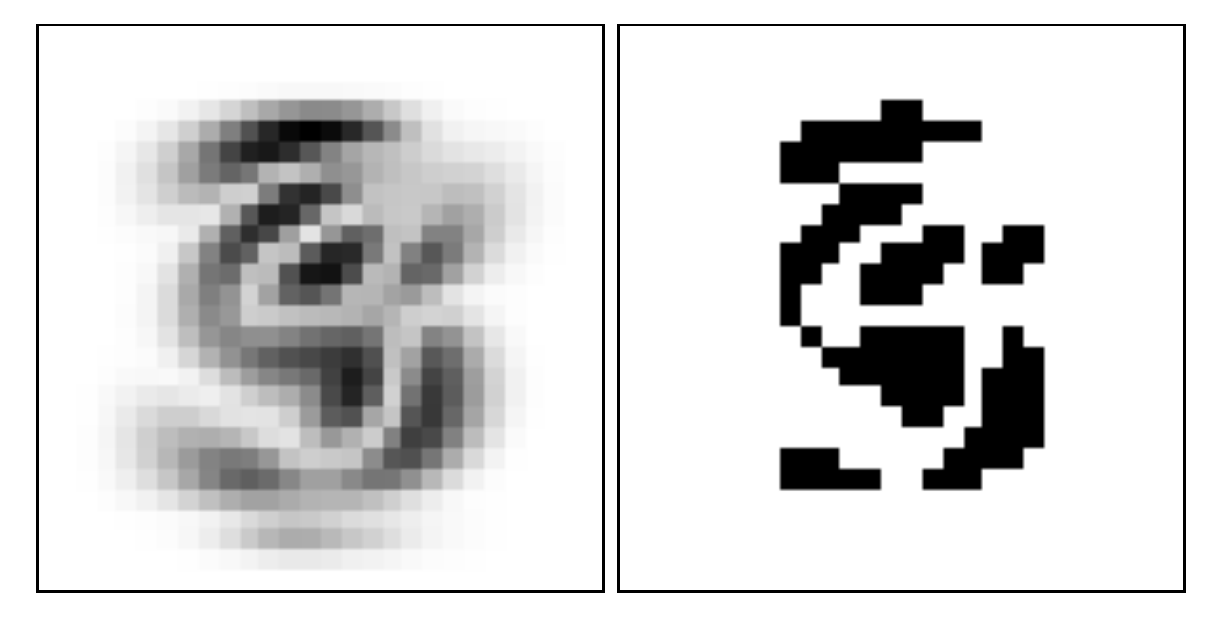}
     \caption{FSDS\cite{huang2015unsupervised}, 3 vs. all}
    \end{subfigure}
    \hfill
    \begin{subfigure}[t]{0.3\columnwidth}
     \centering
     \includegraphics[width=\textwidth]{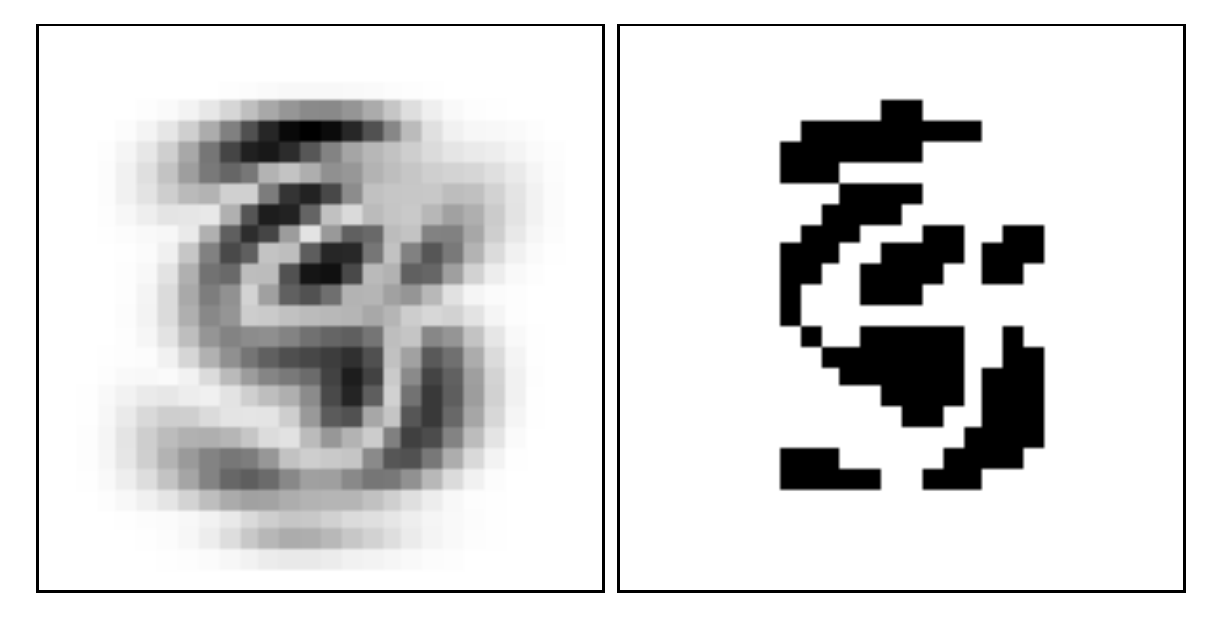}
     \caption{FSDS\cite{huang2015unsupervised}, 8 vs. all}
    \end{subfigure}
    \hfill
    \begin{subfigure}[t]{0.3\columnwidth}
     \centering
     \includegraphics[width=\textwidth]{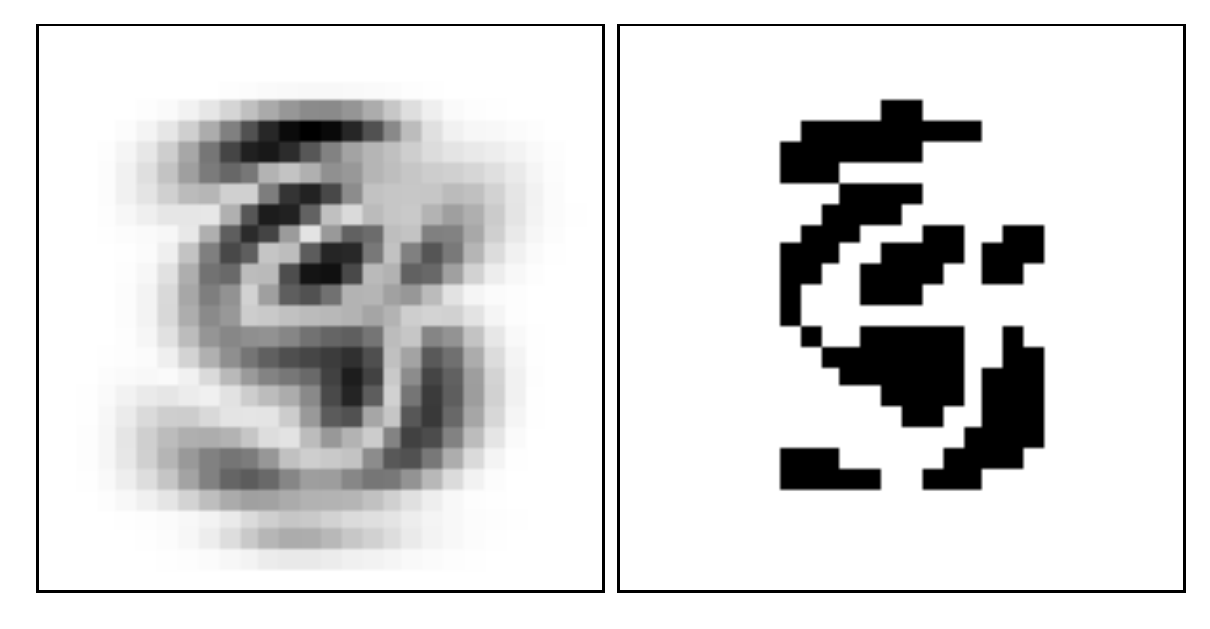}
     \caption{FSDS\cite{huang2015unsupervised}, 9 vs. all}
    \end{subfigure}
    
    \begin{subfigure}[t]{0.3\columnwidth}
     \centering
     \includegraphics[width=\textwidth]{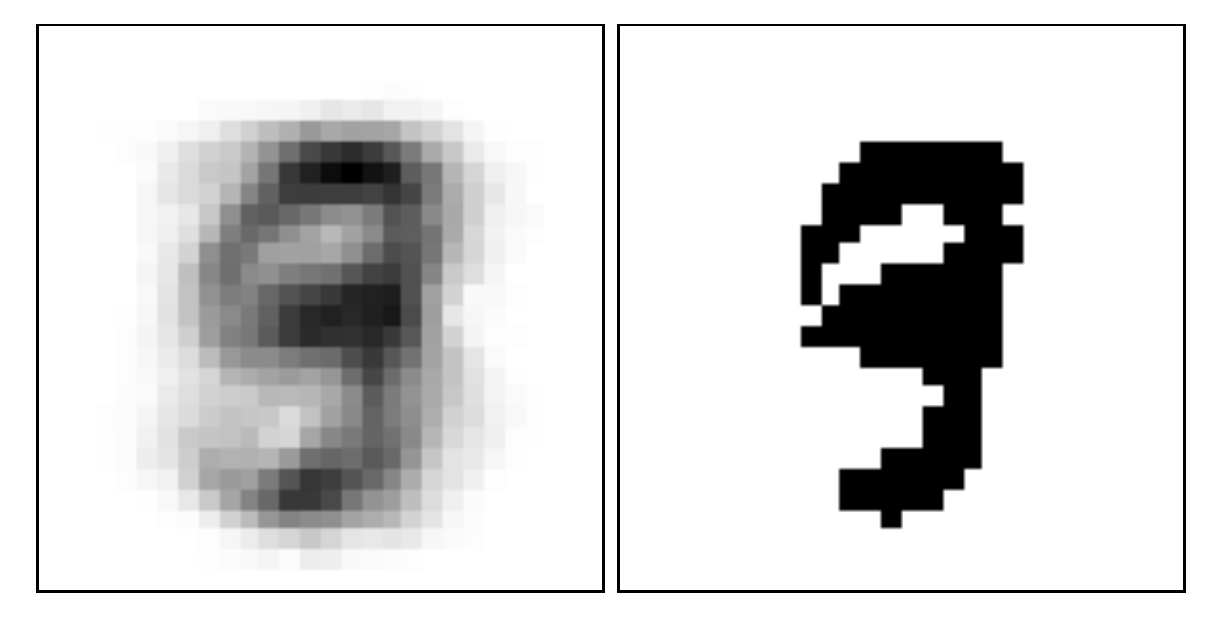}
     \caption{OFS\cite{wang2013online}, 3 vs. all}
    \end{subfigure}
    \hfill
    \begin{subfigure}[t]{0.3\columnwidth}
     \centering
     \includegraphics[width=\textwidth]{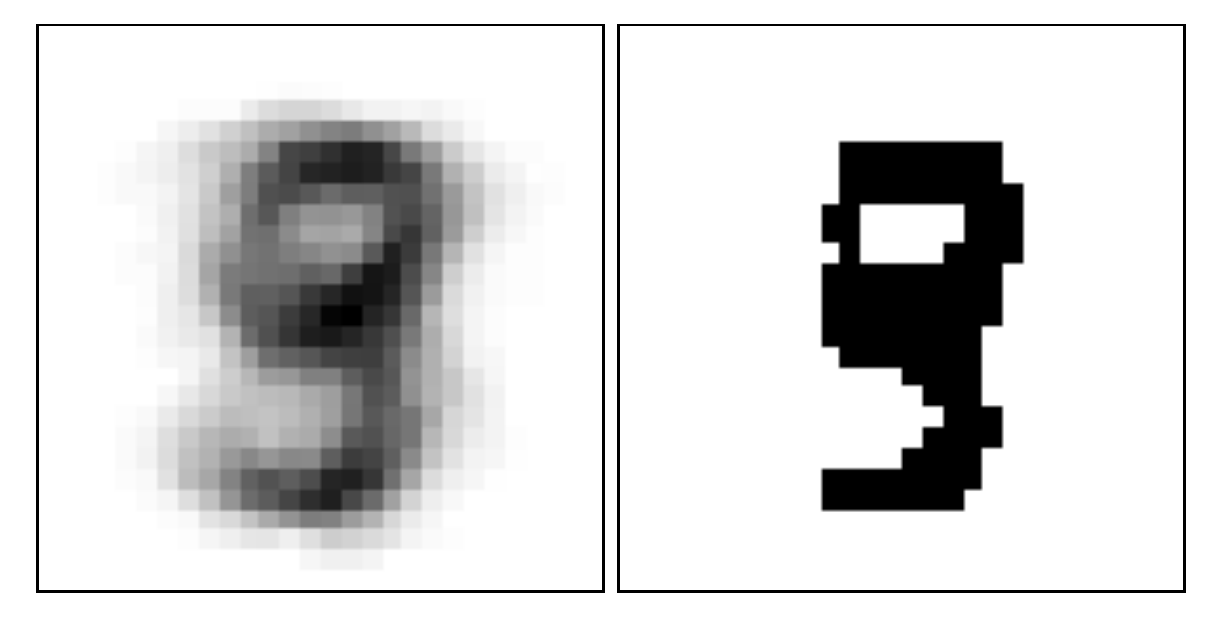}
     \caption{OFS\cite{wang2013online}, 8 vs. all}
    \end{subfigure}
    \hfill
    \begin{subfigure}[t]{0.3\columnwidth}
     \centering
     \includegraphics[width=\textwidth]{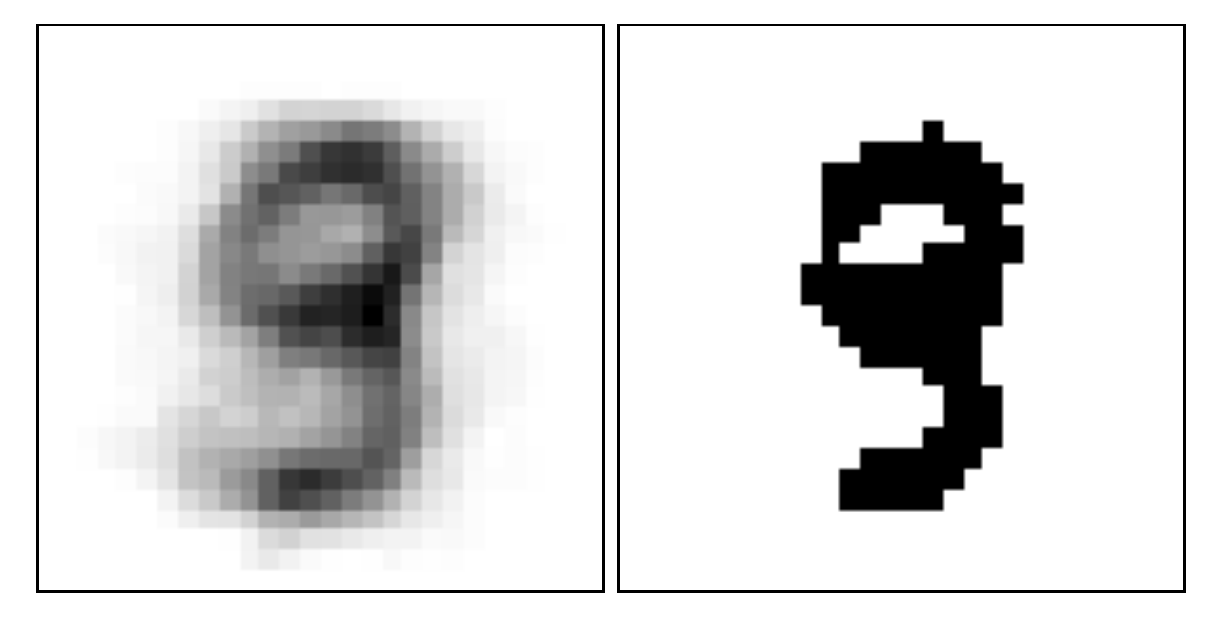}
     \caption{OFS\cite{wang2013online}, 9 vs. all}
    \end{subfigure}
    
    \begin{subfigure}[t]{0.3\columnwidth}
     \centering
     \includegraphics[width=\textwidth]{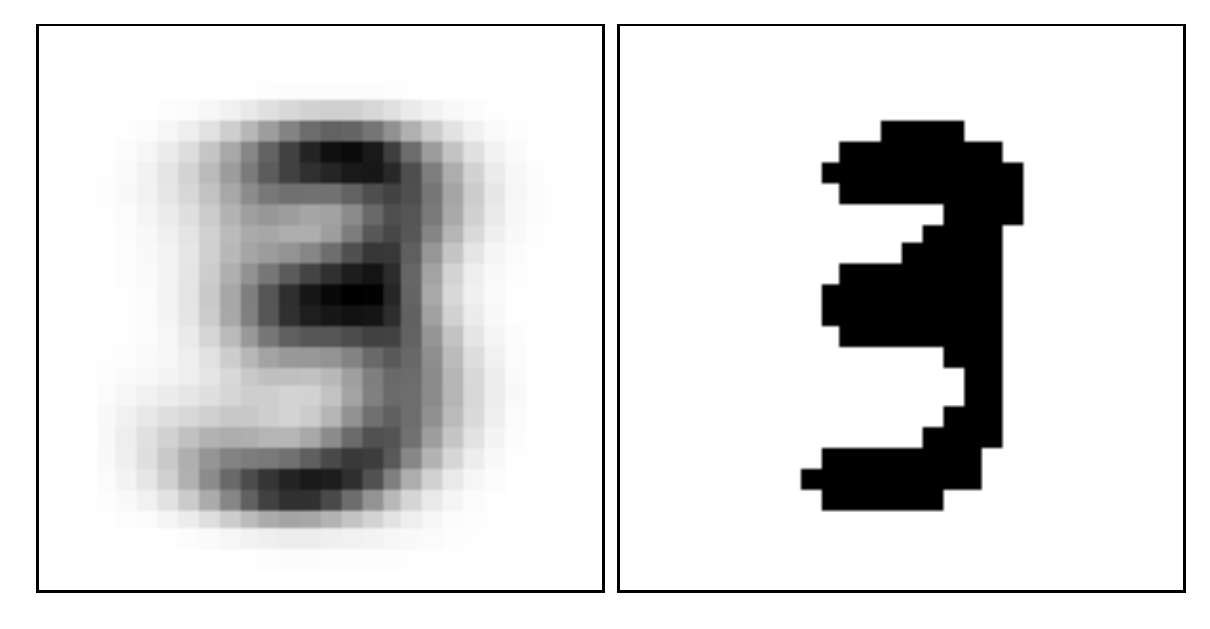}
     \caption{FIRES, 3 vs. all}
    \end{subfigure}
    \hfill
    \begin{subfigure}[t]{0.3\columnwidth}
     \centering
     \includegraphics[width=\textwidth]{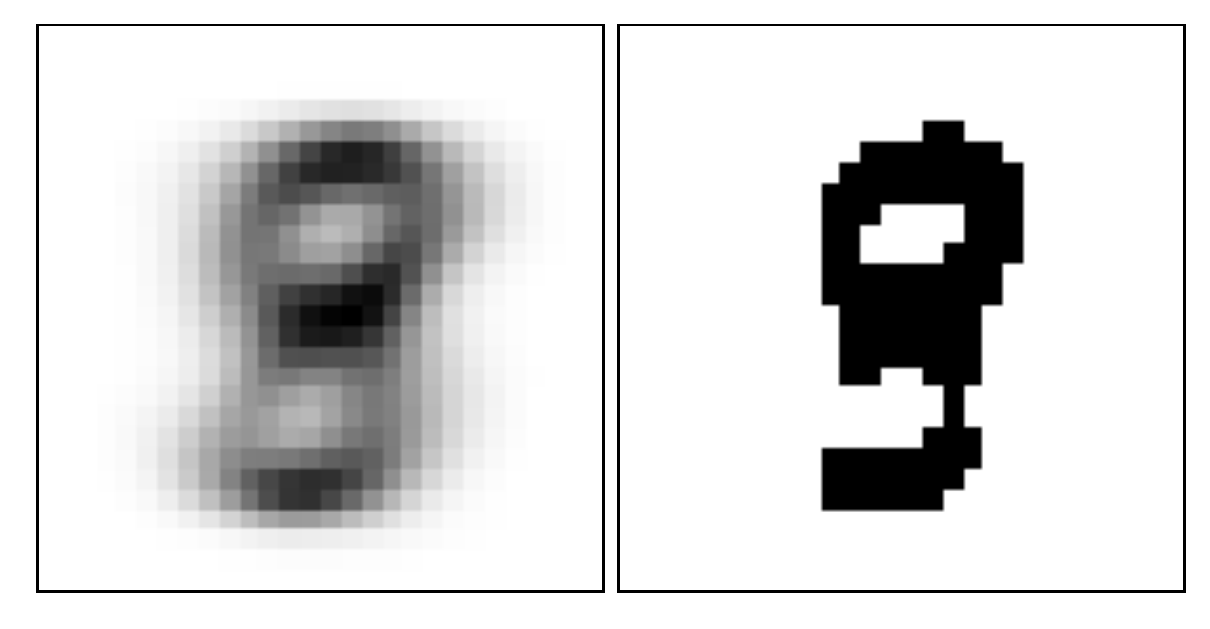}
     \caption{FIRES, 8 vs. all}
    \end{subfigure}
    \hfill
    \begin{subfigure}[t]{0.3\columnwidth}
     \centering
     \includegraphics[width=\textwidth]{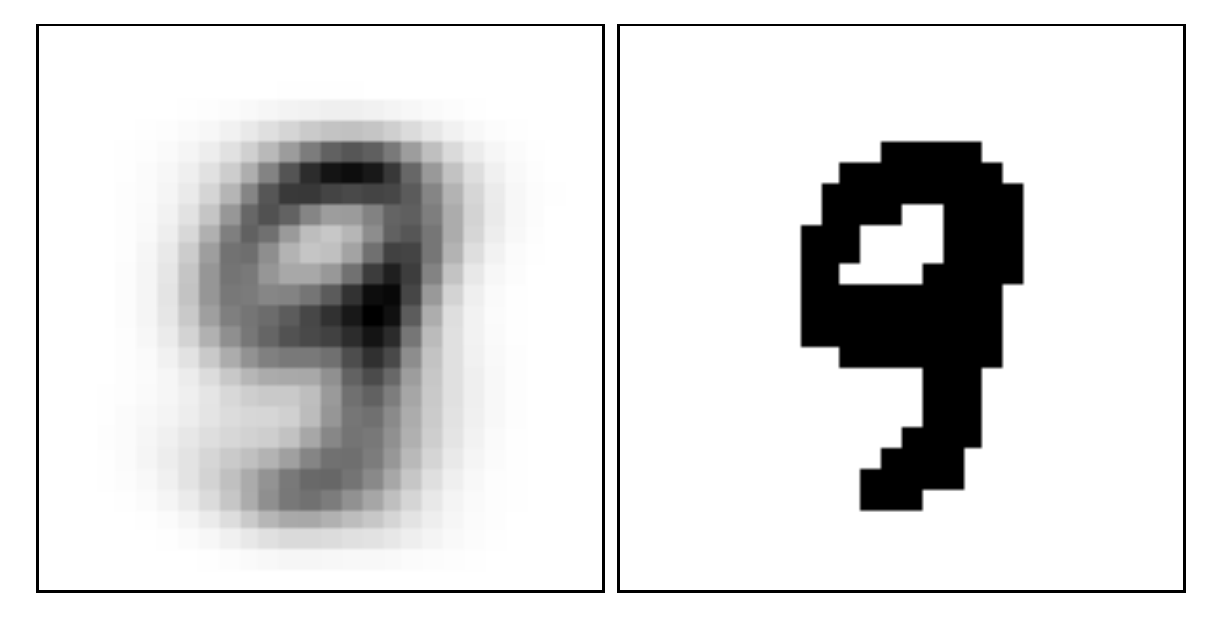}
     \caption{FIRES, 9 vs. all}
    \end{subfigure}
    \caption{\emph{Discriminative Features.} We have sampled all observations from MNIST with the label $3, 8$ or $9$. We successively selected each label as the positive class and carried out a feature selection. The subplots above illustrate the feature weights (left, high weights being dark) and selected features (right, $M=115$) after observing the first 1000 instances. While OFS, FSDS and EFS struggle to select meaningful features, \fires\ (ours) effectively identifies the discriminative features of the positive class after observing very little data.}
    \label{fig:mnist_exp}
\end{figure}

\begin{table*}[t]
\caption{\emph{Accuracy and Computation Time.} We tested our framework (\fires) with three different base models; a GLM, an ANN and an SDT (Sections \ref{sec:fires_probit} - \ref{sec:fires_sdt}). The Online Boosting \cite{wang2016online} and Very Fast Decision Tree (VFDT) \cite{domingos2000mining} models were trained on the full feature set to serve as a benchmark. All models are evaluated on an i7-8550U CPU with 16 Gb RAM, running Windows 10. Further details about the experimental setup can be found in the Appendix. Here, we show the accuracy (acc.) and computation time (feature selection + model training; milliseconds) observed on average per time step. Strikingly, while all three \fires\ models are competitive, \emph{FIRES-GLM} takes first place on average in terms of both predictive accuracy and computation time.}
    \label{tab:accuracy}
    \centering
    \begin{adjustbox}{max width=\textwidth}
        \begin{tabular}{lllllllllllllllll}
        \toprule
        & \multicolumn{4}{c}{Benchmark Models} & \multicolumn{12}{c}{Feature Selection Models (with Perceptron)}\\
        \cmidrule(lr){2-5}\cmidrule(lr){6-17}
        & \multicolumn{2}{c}{Boosting \cite{wang2016online}} & \multicolumn{2}{c}{VFDT \cite{domingos2000mining}} & \multicolumn{2}{c}{FIRES-GLM} & \multicolumn{2}{c}{FIRES-ANN} & \multicolumn{2}{c}{FIRES-SDT} & \multicolumn{2}{c}{OFS \cite{wang2013online}} & \multicolumn{2}{c}{EFS \cite{carvalho2006single}} & \multicolumn{2}{c}{FSDS \cite{huang2015unsupervised}}\\
        \cmidrule(lr){2-3}\cmidrule(lr){4-5}\cmidrule(lr){6-7}\cmidrule(lr){8-9}\cmidrule(lr){10-11}\cmidrule(lr){12-13}\cmidrule(lr){14-15}\cmidrule(lr){16-17}
        Datasets & acc.& ms & acc. & ms & acc. & ms & acc. & ms & acc. & ms & acc. & ms & acc. & ms & acc. & ms \\
        \cmidrule(lr){1-1}\cmidrule(lr){2-3}\cmidrule(lr){4-5}\cmidrule(lr){6-7}\cmidrule(lr){8-9}\cmidrule(lr){10-11}\cmidrule(lr){12-13}\cmidrule(lr){14-15}\cmidrule(lr){16-17}
        HAR & 0.788 & 2264.805 & 0.819 & 192.515 & 0.87 & \textbf{2.729} & 0.842 & 38.156 & 0.872 & 27.497 & 0.928 & 20.892 & 0.87 & 43.309 & \textbf{0.919} & 4.195\\
        Spambase & \textbf{0.83} & 234.741 & 0.638 & 21.096 & 0.742 & \textbf{1.761} & 0.685 & 28.341 & 0.66 & 23.333 & 0.721 & 10.135 & 0.577 & 15.463 & 0.657 & 1.969\\
        Usenet & 0.507 & 2448.95 & 0.504 & 206.354 & 0.556 & \textbf{2.996} & 0.541 & 41.897 & 0.541 & 27.948 & \textbf{0.563} & 44.696 & 0.531 & 99.758 & 0.537 & 4.331\\
        Gisette & 0.673 & 15026.415 & 0.687 & 1781.348 & \textbf{0.933} & \textbf{22.391} & 0.902 & 164.682 & 0.906 & 42.29 & 0.911 & 291.428 & 0.881 & 667.923 & 0.921 & 36.806\\
        Madelon & \textbf{0.551} & 1398.29 & 0.481 & 245.073 & 0.523 & \textbf{2.849} & 0.509 & 38.992 & 0.508 & 27.482 & 0.522 & 51.872 & 0.511 & 82.638 & 0.539 & 4.217\\
        Dota & \textbf{0.552} & 330.895 & 0.521 & 54.143 & 0.516 & \textbf{2.200} & 0.505 & 30.645 & 0.505 & 23.62 & 0.514 & 18.084 & 0.504 & 24.396 & 0.505 & 2.944\\
        KDD & \textbf{0.989} & 110.659 & 0.985 & 12.226 & 0.969 & 2.073 & 0.928 & 28.614 & 0.956 & 22.96 & 0.962 & 1.829 & 0.971 & 10.756 & 0.785 & \textbf{1.977} \\
        MNIST & 0.906 & 1960.237 & 0.894 & 275.872 & 0.930 & \textbf{3.298} & 0.884 & 43.129 & 0.940 & 28.607 & \textbf{0.950} & 14.597 & 0.879 & 24.094 & 0.930 & 6.075\\
        RBF & 0.323 & 26265.709 & 0.738 & 3567.720 & 0.973 & \textbf{34.953} & \textbf{0.995} & 226.017 & 0.973 & 38.409 & 0.748 & 1356.477 & 0.973 & 1594.846 & 0.984 & 85.514\\
        RTG & 0.812 & 1244.723 & \textbf{0.835} & 203.776 & 0.793 & \textbf{2.609} & 0.730 & 35.698 & 0.743 & 26.166 & 0.743 & 31.511 & 0.789 & 45.706 & 0.719 & 4.165\\
        \cmidrule(lr){1-1}\cmidrule(lr){2-3}\cmidrule(lr){4-5}\cmidrule(lr){6-7}\cmidrule(lr){8-9}\cmidrule(lr){10-11}\cmidrule(lr){12-13}\cmidrule(lr){14-15}\cmidrule(lr){16-17}
        Mean & 0.693 & 5128.542 & 0.710 & 656.012 & \textbf{0.781} & \textbf{7.786} & 0.752 & 67.617 & 0.760 & 28.831 & 0.756 & 184.152 & 0.749 & 260.889 & 0.750 & 15.219\\
        Rank & 8. & 8. & 7. & 7. & \textbf{1.} & \textbf{1.} & 4. & 4. & 2. & 3. & 3. & 5. & 6. & 6. & 5. & 2.\\
        \bottomrule
        \end{tabular}
    \end{adjustbox}
\end{table*}

\begin{table}[t]
\caption{\emph{Feature Selection Stability.} Here, we show the average feature selection stability per time step according to \eqref{eq:stability}. The size of the shifting window was 10. All \fires~models produce consistently stable feature sets, with the GLM based model ranking first place on average.}
    \label{tab:stability}
    \centering
    \begin{adjustbox}{max width=\columnwidth}
        \begin{tabular}{lllllllll}
        \toprule
        & \multicolumn{6}{c}{Feature Selection Models}\\
        \cmidrule(lr){1-1}\cmidrule(lr){2-7}
        & FIRES & FIRES & FIRES & & &\\
        Datasets & -GLM & -ANN & -SDT & OFS \cite{wang2013online} & EFS \cite{carvalho2006single} & FSDS \cite{huang2015unsupervised}\\
        \cmidrule(lr){1-1}\cmidrule(lr){2-7}
        HAR & 0.985 & 0.652 & 0.865 & 0.756 & 0.921 & \textbf{0.986} \\
        Spambase & 0.901 & 0.819 & 0.710 & \textbf{0.971} & 0.822 & 0.908 \\
        Usenet & 0.820 & 0.931 & 0.747 & 0.775 & 0.749 & \textbf{0.937} \\
        Gisette & 0.937 & \textbf{0.987} & 0.756 & 0.295 & 0.845 & 0.949 \\
        Madelon & 0.526 & 0.489 & 0.682 & 0.158 & \textbf{0.783} & 0.281 \\
        Dota & 0.978 & 0.950 & 0.932 & 0.444 & \textbf{0.993} & 0.700 \\
        KDD & 0.997 & 0.996 & 0.978 & 0.940 & 0.990 & \textbf{0.999} \\
        MNIST & \textbf{0.996} & 0.753 & 0.950 & 0.703 & 0.981 & 0.989 \\
        RBF & 0.959 & \textbf{0.993} & 0.793 & 0.018 & 0.906 & 0.812 \\
        RTG & \textbf{0.856} & 0.582 & 0.827 & 0.457 & 0.840 & 0.080 \\
        \cmidrule(lr){1-7}
        Mean (Rank) & \textbf{0.896} & 0.815 & 0.824 & 0.552 & 0.883 & 0.764\\
        Rank & \textbf{1.} & 3. & 4. & 6. & 2. & 5.\\
        \bottomrule
        \end{tabular}
    \end{adjustbox}
\end{table}

\subsection{Results}
Feature selection generally aims to identify input patterns that are discriminative with respect to the target. We show that \fires\ does indeed identify discriminative features by selecting the MNIST data set for illustration. The task was to distinguish the class labels $3$, $8$ and $9$, which can be difficult due to their similarity. By successively using each class as the true label, we obtained three different feature weights, which are shown in Figure \ref{fig:mnist_exp}. In this experiment, we used \fires\ with the ANN base model. Strikingly, while all other models had difficulty in selecting a meaningful set of features, \fires\ effectively captured the true pattern of the positive class. In fact, \fires\ has produced feature weights and feature sets that are easy for humans to interpret. 

\begin{figure}[t]
\centering
     \begin{subfigure}[t]{0.93\columnwidth}
         \centering
         \includegraphics[width=\textwidth]{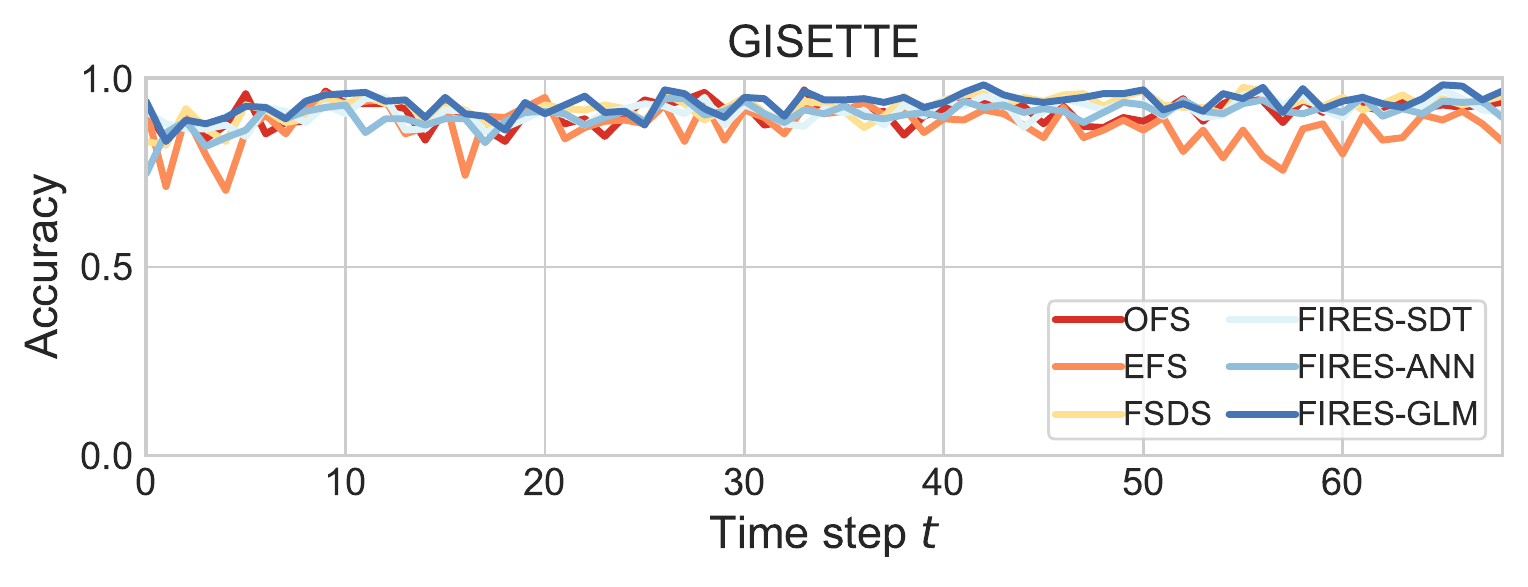}
     \end{subfigure}
     \hfill
     \begin{subfigure}[t]{0.93\columnwidth}
         \centering
         \includegraphics[width=\textwidth]{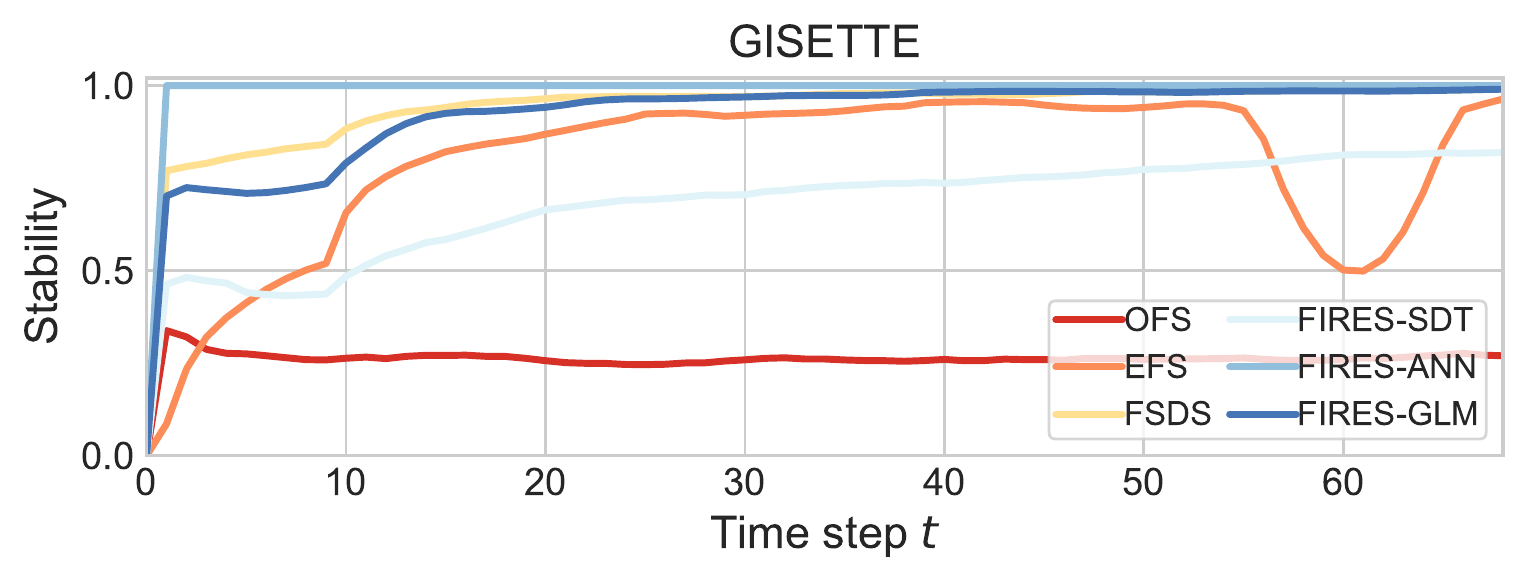}
     \end{subfigure}
    \caption{\emph{Accuracy and Stability over Time.} Here, we show how the accuracy and stability scores develop over time. We used the Gisette data for illustration. We trained on batches of size 100 and used a shifting window of size 10 to compute stability (Eq. \eqref{eq:stability}). All feature selection models achieve similar accuracy. The \fires\ models maximize stability over time, which was also observed for the remaining data sets.}
    \label{fig:acc_develop}
\end{figure}

Note that all models we compare are wrapper methods \cite{ramirez2017survey}. As such, they use a predictive model for feature selection, but do not make predictions themselves. To assess the predictive power of feature sets, we therefore had to choose an online classifier that was trained on the selected features. We chose a Perceptron algorithm because it is relatively simple, but still impressively demonstrates the positive effect of online feature selection. As Table \ref{tab:accuracy} shows, feature selection can increase the performance of a Perceptron to the point where it can compete with state-of-the-art online predictive models. For comparison, we trained an Online Boosting \cite{wang2016online} model and a Very Fast Decision Tree (VFDT) \cite{domingos2000mining} on the full feature space. All predictive models were trained in an interleaved test-then-train fashion (i.e. prequential evaluation). The Online Boosting model shows very poor performance for the RBF data set. This is due to the fact that the Na\"ive Bayes models, which we used as base learners, were unable to enumerate the high-dimensional RBF data set, given the relatively small sample size. Nevertheless, we have kept the same boosting architecture in all experiments for reasons of comparability. We chose accuracy as a prediction metric, because it is a common choice in the literature. Moreover, since we do not take into account extremely imbalanced data, accuracy provides a meaningful assessment of the discriminative power of each model. 

Table \ref{tab:accuracy} and Table \ref{tab:stability} exhibit the average accuracy, computation time and stability of multiple evaluations with varying batch sizes (25,50,75,100) and fractions of selected features (0.1, 0.15, 0.2). In Figure \ref{fig:acc_develop} we show how the accuracy and stability develops over time. In addition, Figure \ref{fig:auc_stab} illustrates how the different models manage to balance accuracy and stability. We selected Gisette for illustration, because it is a common benchmark data set for feature selection. Note, however, that we have observed similar results for all remaining data sets.

The results show that our framework is among the top models for online feature selection in terms of computation time, predictive accuracy and stability. Strikingly, \fires\ coupled with the least complex base model (GLM) takes first place on average in all three categories (see Table \ref{tab:accuracy} and \ref{tab:stability}). The GLM based model generates discriminative feature sets, even if the data is not linearly separable (e.g. MNIST), which may seem strange at first. For feature weighting, however, it is sufficient if we capture the relative importance of the input features in the prediction. Therefore, a model does not necessarily have to have a high classification accuracy. Since GLMs only have to learn a few parameters, they tend to recognize important features faster than other, more complex models. Furthermore, \emph{FIRES-ANN} and \emph{FIRES-SDT} are subject to uncertainty due to the sampling we use to approximate the marginal likelihood. In this experiment, \emph{FIRES-GLM} achieved better results than all related models. Still, whenever a linear model may not be sufficient, the \fires\ framework is flexible enough to allow base models of higher complexity.

\begin{figure}[t]
\centering
    \includegraphics[width=0.93\columnwidth]{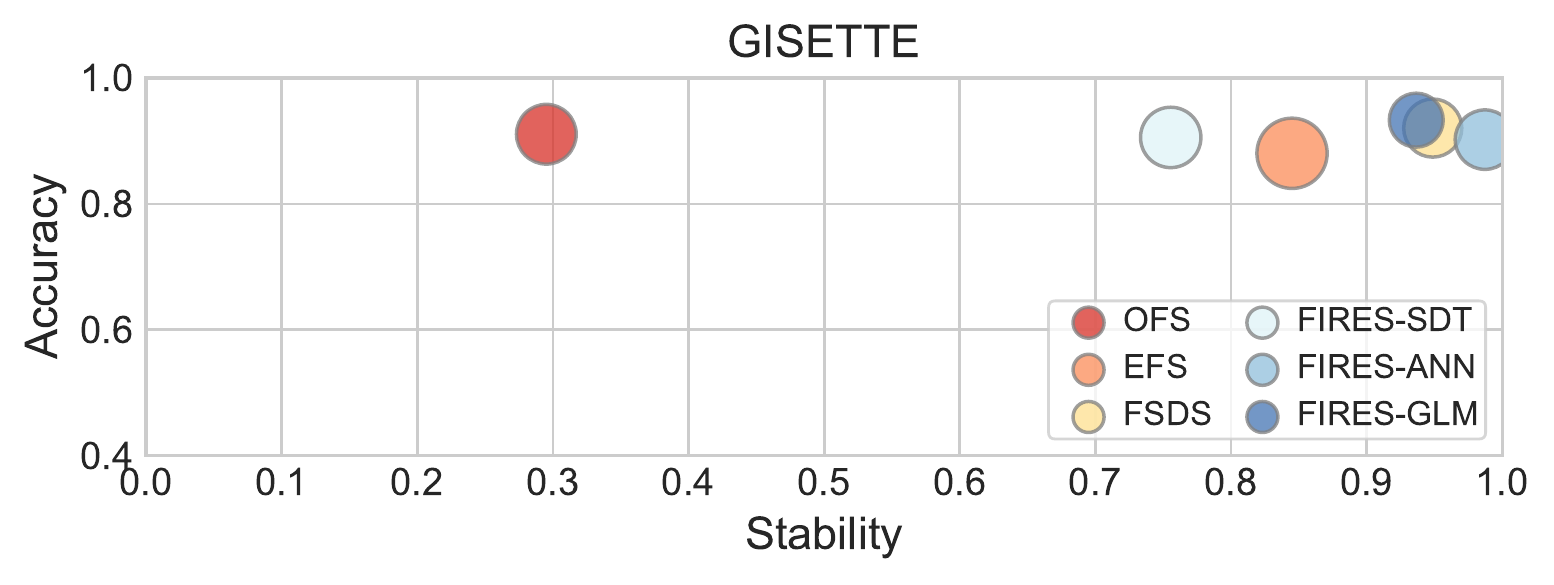}
    \caption{\emph{Accuracy vs. Stability.} In practice, predictions must be both robust and accurate. The proposed framework (\fires) aims at maximizing both aspects and produces feature sets accordingly, which we exemplary show for the Gisette data set. The circle size indicates the variance in accuracy. A perfect result would lie in the top right corner of the plot. Note that in all data sets, we always found at least one \fires-model in the top region (here, it is the models \emph{FIRES-GLM} and \emph{FIRES-ANN}).}
    \label{fig:auc_stab}
\end{figure}

\section{Conclusion}
In this work, we introduced a novel feature selection framework for high-dimensional streaming applications, called \fires. Using probabilistic principles, our framework extracts importance and uncertainty scores regarding the current predictive power of every input feature. We weigh high importance against uncertainty, producing feature sets that are both discriminative and robust. The proposed framework is modular and can therefore be adapted to the requirements of any learning task. For illustration, we applied \fires\ to three common linear and nonlinear models and evaluated it using several real-world and synthetic data sets. Experiments show that \fires\ produces more stable and discriminative feature sets than state-of-the-art online feature selection approaches, while offering a clear advantage in terms of computational efficiency.

\bibliographystyle{ACM-Reference-Format}
\balance


\appendix
\section{Important Formulae}
\begin{lemma}
\label{app:lemma_1}
	Let $X\sim\mathcal{N}(\mu,\sigma)$ be a normally-distributed random variable and let $\Phi$ be the cumulative distribution function (CDF) of a standard normal distribution. Then the following equation holds:
	\begin{align}
		\mathbb{E}\left[\Phi(\alpha X + \beta)\right] &= \int_{-\infty}^{\infty}\Phi(\alpha x + \beta)P(X = x) ~dx \nonumber\\ 
		&= \Phi\left(\frac{\beta+\alpha\mu}{\sqrt{1 + \alpha^2\sigma^2}}\right)
		\label{eqn:Phi-expectation}
	\end{align}
\end{lemma}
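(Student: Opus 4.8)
The plan is to reduce the expectation to a probability statement about a linear combination of two independent Gaussians, which can then be evaluated by a single standardization. First I would introduce an auxiliary random variable $Z \sim \mathcal{N}(0,1)$ that is independent of $X$, and invoke the defining identity $\Phi(u) = P(Z \leq u)$ to rewrite the integrand as $\Phi(\alpha x + \beta) = P(Z \leq \alpha x + \beta \mid X = x)$. Taking the expectation over $X$ and applying the tower property (equivalently Fubini, which is justified since $\Phi \leq 1$ is bounded, so every integral in sight converges absolutely) yields
\[
    \mathbb{E}\!\left[\Phi(\alpha X + \beta)\right] = P(Z \leq \alpha X + \beta) = P(Z - \alpha X \leq \beta).
\]

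The second step is to identify the law of $W := Z - \alpha X$. Because $Z$ and $X$ are independent Gaussians — with $X \sim \mathcal{N}(\mu,\sigma)$, where $\sigma$ denotes the standard deviation in the paper's convention — the linear combination $W$ is again Gaussian, with mean $\mathbb{E}[W] = -\alpha\mu$ and variance $\mathrm{Var}(W) = 1 + \alpha^2\sigma^2$. Standardizing $W$ then gives
\[
    P(W \leq \beta) = \Phi\!\left(\frac{\beta - (-\alpha\mu)}{\sqrt{1 + \alpha^2\sigma^2}}\right) = \Phi\!\left(\frac{\beta + \alpha\mu}{\sqrt{1 + \alpha^2\sigma^2}}\right),
\]
which is exactly the claimed identity.

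Every step is routine, so there is no genuine obstacle; the two points that merely require care are the bookkeeping of the convention $\mathcal{N}(\mu,\sigma)$ (so that the normalizing factor $\sqrt{1+\alpha^2\sigma^2}$, rather than $\sqrt{1+\alpha^2\sigma}$, emerges), and a one-line justification of the expectation-probability interchange. I expect to present the auxiliary-variable argument above as the main proof because it avoids any messy integration.

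As an alternative, a purely analytic derivation is available: I would write $\Phi(\alpha x + \beta) = \int_{-\infty}^{\alpha x + \beta} \phi(t)\,dt$, substitute the Gaussian density of $X$, and evaluate the resulting double integral by completing the square in the exponent, where the cross term produces the $\alpha\mu$ contribution and the quadratic term furnishes the $1 + \alpha^2\sigma^2$ normalization. This route reaches the same answer but is more computational, so I would relegate it to a remark.
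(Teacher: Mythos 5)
Your proof is correct and follows essentially the same route as the paper's: both introduce an independent auxiliary standard normal variable, rewrite $\Phi(\alpha X + \beta)$ as the probability $P(Z - \alpha X \leq \beta)$, and conclude by noting that $Z - \alpha X$ is Gaussian with mean $-\alpha\mu$ and variance $1 + \alpha^2\sigma^2$. Your added remarks on Fubini and the $\mathcal{N}(\mu,\sigma)$ standard-deviation convention are harmless refinements, not a different argument.
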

\begin{proof}
Let $Y\sim\mathcal{N}(0,1)$ be a standard-normal-distributed random variable (independent of $X$). We can then rewrite $\Phi$ as: $\Phi(r)=P(Y \leq r)$. Using this, we get:
\begin{align*}
	& \mathbb{E}\left[\Phi(\alpha X + \beta)\right]\\
	={} & \mathbb{E}\left[P(Y \leq \alpha X + \beta)\right]\\
	={} & P(Y \leq \alpha X + \beta)\\
	={} & P(Y - \alpha X \leq \beta)
\end{align*}
The linear combination $Y - \alpha X$ is again normal-distributed with mean $-\alpha \mu$ and variance $\sqrt{1^2 + \alpha^2\sigma^2}$. Hence, we can write
\[
	Y - \alpha X = \sqrt{1 + \alpha^2\sigma^2}Z - \alpha \mu
\]
where $Z\sim\mathcal{N}(0,1)$ is again standard-normal-distributed. We then get:
\begin{align*}
& Y - \alpha X \leq \beta\\
	\Leftrightarrow\quad{} & \sqrt{1 + \alpha^2\sigma^2}Z - \alpha \mu \leq \beta\\
	\Leftrightarrow\quad{} & Z \leq \frac{\beta + \alpha \mu}{\sqrt{1 + \alpha^2\sigma^2}}
\end{align*}
Hence, we get
\begin{align*}
	& \mathbb{E}\left[\Phi(\alpha X + \beta)\right]\\
	={} & P(Y - \alpha X \leq \beta)\\
	={} & P\left(Z \leq \frac{\beta + \alpha \mu}{\sqrt{1 + \alpha^2\sigma^2}}\right)\\
	={} & \Phi\left(\frac{\beta + \alpha \mu}{\sqrt{1 + \alpha^2\sigma^2}}\right)
\end{align*}
\end{proof}

\begin{lemma}
\label{app:lemma_2}
	Let $X=(X_1,...,X_n)$ be normally-distributed random variables with $X_i\sim\mathcal{N}(\mu_i,\sigma_i)$. Let $\alpha_i, \beta\in\mathbb{R}$ be coefficients and let $\Phi$ be the CDF of a standard normal distribution. Then the following equation holds:
	\begin{align}
	    &\mathbb{E}\left[\Phi\left(\sum_{i=1}^n\alpha_i X_i + \beta\right)\right] \nonumber\\ 
		&= \int_{-\infty}^{\infty}\cdot\cdot\int_{-\infty}^{\infty}\Phi\left(\sum_{i=1}^n\alpha_i x_i + \beta\right)\prod_{i=1}^n P(X_i = x_i) ~dx_n\,.\,.\,dx_1 \nonumber\\
		& = \Phi\left(\frac{\beta+\sum_{i=1}^n\alpha_i\mu_i}{\sqrt{1 + \sum_{i=1}^n\alpha_i^2\sigma_i^2}}\right) \label{eqn:multivariate-Phi-expectation}
	\end{align}
\end{lemma}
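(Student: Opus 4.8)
The plan is to follow exactly the probabilistic argument used in the proof of Lemma \ref{app:lemma_1}, now with a multivariate linear combination. First I would introduce an auxiliary standard normal random variable $Y\sim\mathcal{N}(0,1)$, drawn independently of all the $X_i$, and rewrite the standard normal CDF as a probability, $\Phi(r)=P(Y\leq r)$. Conditioning on $X$ and using independence of $Y$ from $X$, the expectation of $\Phi\bigl(\sum_i \alpha_i X_i + \beta\bigr)$ equals the joint probability $P\bigl(Y \leq \sum_i \alpha_i X_i + \beta\bigr)$, which I would rearrange into $P\bigl(Y - \sum_i \alpha_i X_i \leq \beta\bigr)$.

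The key step is then to identify the distribution of the linear combination $W := Y - \sum_{i=1}^n \alpha_i X_i$. Because $Y$ and the $X_i$ are mutually independent and each is Gaussian, $W$ is again Gaussian; its mean is $-\sum_i \alpha_i \mu_i$ and, crucially, its variance is the sum of the individual variances, $1 + \sum_i \alpha_i^2 \sigma_i^2$ (using that $\sigma_i$ denotes the standard deviation of $X_i$). I would therefore write $W = \sqrt{1 + \sum_i \alpha_i^2 \sigma_i^2}\,Z - \sum_i \alpha_i \mu_i$ for some $Z\sim\mathcal{N}(0,1)$, and standardize the event $W\leq\beta$ to obtain $Z \leq (\beta + \sum_i \alpha_i\mu_i)/\sqrt{1+\sum_i\alpha_i^2\sigma_i^2}$. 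Evaluating $P(Z\leq\cdot)=\Phi(\cdot)$ then yields the claimed identity \eqref{eqn:multivariate-Phi-expectation}.

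The main obstacle, really the only nontrivial ingredient, is justifying the distribution of $W$: one must invoke the independence of the $X_i$ (implicit in the product of marginal densities appearing in the statement) so that the variance of the combination is the sum of the variances, rather than having to carry out a covariance computation. An alternative and equally clean route is induction on $n$, repeatedly applying Lemma \ref{app:lemma_1}: peel off one variable $X_n$ at a time via \eqref{eqn:Phi-expectation}, taking $\alpha=\alpha_n$ and letting the intercept absorb the remaining linear terms, and let the inductive hypothesis collapse the accumulated constant and variance contributions under the square root. Either way the remaining algebra is routine once the Gaussianity of the combined variable is established.
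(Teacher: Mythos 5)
Your proposal is correct, but your primary route differs from the paper's. The paper proves this lemma by induction on $n$: Lemma~\ref{app:lemma_1} is the base case, and the inductive step absorbs $\alpha_{n+1}X_{n+1}+\beta$ into a random intercept $\tilde\beta$, applies the $n$-variable identity, and then invokes \eqref{eqn:Phi-expectation} once more to integrate out $X_{n+1}$ --- exactly the ``peel off one variable at a time'' alternative you sketch at the end. Your main argument instead generalizes the auxiliary-variable trick from the \emph{proof} of Lemma~\ref{app:lemma_1} directly to $n$ variables: introduce $Y\sim\mathcal{N}(0,1)$ independent of $X$, rewrite the expectation as $P\bigl(Y-\sum_i\alpha_i X_i\leq\beta\bigr)$, and use that a linear combination of mutually independent Gaussians is Gaussian with variance $1+\sum_i\alpha_i^2\sigma_i^2$. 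This one-shot computation is shorter and has the virtue of isolating where independence of the $X_i$ enters (the additivity of variances), a hypothesis you correctly note is only implicit in the product of marginal densities in the statement. What the paper's induction buys in exchange is that it uses Lemma~\ref{app:lemma_1} purely as a black box and never needs any multivariate Gaussian fact beyond the univariate case --- though, as a point in your favor, the paper's inductive step quietly applies the inductive hypothesis with the \emph{random} intercept $\tilde\beta$, which is itself justified by conditioning on $X_{n+1}$ and the same independence; your direct route makes that usage explicit rather than leaving it tacit. Both arguments are sound, and the remaining algebra is routine in either case.
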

\begin{proof}
	Proof by mathematical induction over $n$:\\
	{\textbf Base case $n=1$:} Lemma~\ref{app:lemma_1}\\
	{\textbf Inductive step $n\mapsto n+1$:}
	Let (\ref{eqn:multivariate-Phi-expectation}) hold for $n$. For $n+1$, we get:
	\begin{align*}
		& \mathbb{E}\Biggl[\Phi\Biggl(\sum_{i=1}^{n+1}\alpha_i X_i + \beta\Biggr)\Biggr]\\ 
		&= \mathbb{E}\Biggl[\Phi\Biggl(\sum_{i=1}^{n}\alpha_i X_i + \underbrace{\alpha_{n+1} X_{n+1} +\beta}_{=:\tilde\beta}\Biggr)\Biggr]\\
		& \stackrel{(\ref{eqn:multivariate-Phi-expectation})}= \mathbb{E}\Biggl[\Phi\left(\frac{\tilde\beta+\sum_{i=1}^n\alpha_i\mu_i}{\sqrt{1 + \sum_{i=1}^n\alpha_i^2\sigma_i^2}}\right)\Biggr]\\
		& = \mathbb{E}\Biggl[\Phi\left(\frac{\alpha_{n+1}}{\sqrt{1 + \sum_{i=1}^n\alpha_i^2\sigma_i^2}}X_{n+1} + \frac{\beta +\sum_{i=1}^n\alpha_i\mu_i}{\sqrt{1 + \sum_{i=1}^n\alpha_i^2\sigma_i^2}}\right)\Biggr] \\
		& \stackrel{(\ref{eqn:Phi-expectation})}= \Phi\left(\frac{\frac{\beta +\sum_{i=1}^n\alpha_i\mu_i}{\sqrt{1 + \sum_{i=1}^n\alpha_i^2\sigma_i^2}} + \frac{\alpha_{n+1}}{\sqrt{1 + \sum_{i=1}^n\alpha_i^2\sigma_i^2}}\mu_{n+1}}{\sqrt{1 + \left(\frac{\alpha_{n+1}}{\sqrt{1 + \sum_{i=1}^n\alpha_i^2\sigma_i^2}}\right)^2\sigma_{n+1}^2}}\right)\\
		& = \Phi\left(\frac{\beta +\sum_{i=1}^n\alpha_i\mu_i + \alpha_{n+1}\mu_{n+1}}{\sqrt{1 + \sum_{i=1}^n\alpha_i^2\sigma_i^2 + \alpha_{n+1}^2\sigma_{n+1}^2}}\right)\\ 
		&= \Phi\left(\frac{\beta +\sum_{i=1}^{n+1}\alpha_i\mu_i}{\sqrt{1 + \sum_{i=1}^{n+1}\alpha_i^2\sigma_i^2}}\right)
	\end{align*}
\end{proof}

\section{Experimental Setting}
\subsection{Python Packages}
All experiments were conducted on an i7-8550U CPU with 16 Gb RAM, running Windows 10. We have set up an Anaconda (v4.8.2) environment with Python (v3.7.1). We mostly used standard Python packages, including \emph{numpy} (v1.16.1), \emph{pandas} (v0.24.1), \emph{scipy} (v1.2.1), and \emph{scikit-learn} (v0.20.2). The ANN base model was implemented with \emph{pytorch} (v1.0.1). Besides, we used the SDT implementation provided at \url{https://github.com/AaronX121/Soft-Decision-Tree/blob/master/SDT.py}, extracting the gradients after every training step. 

During the evaluation, we further used the \emph{FileStream}, \emph{RandomRBFGenerator}, \emph{RandomTreeGenerator}, \emph{PerceptronMask}, \emph{HoeffdingTree} (VFDT) and \emph{OnlineBoosting} functions of \emph{scikit-multiflow} (v0.4.1). Finally, we generated plots with \emph{matplotlib} (v3.0.2) and \emph{seaborn} (v0.9.0).

\subsection{\emph{scikit-multiflow} Hyperparameters}
If not explicitly specified here, we have used the default hyperparameters of scikit-multiflow \cite{montiel2018scikit}. 

For the \emph{OnlineBoosting()} function, we have specified the following hyperparameters:
\begin{itemize}
    \item \textbf{base\_estimator} = NaiveBayes()
    \item \textbf{n\_estimators} = 3
    \item \textbf{drift\_detection} = False
    \item \textbf{random\_state} = 0
\end{itemize}

Besides, we used the \emph{HoeffdingTree()} function to train a VFDT model. We set the parameter \emph{leaf\_prediction} to \emph{'mc'} (majority class), since the default choice \emph{'nba'} (adaptive Na\"ive Bayes) is very inefficient in high-dimensional applications.

\begin{figure}[t]
\centering
    \includegraphics[width=\columnwidth]{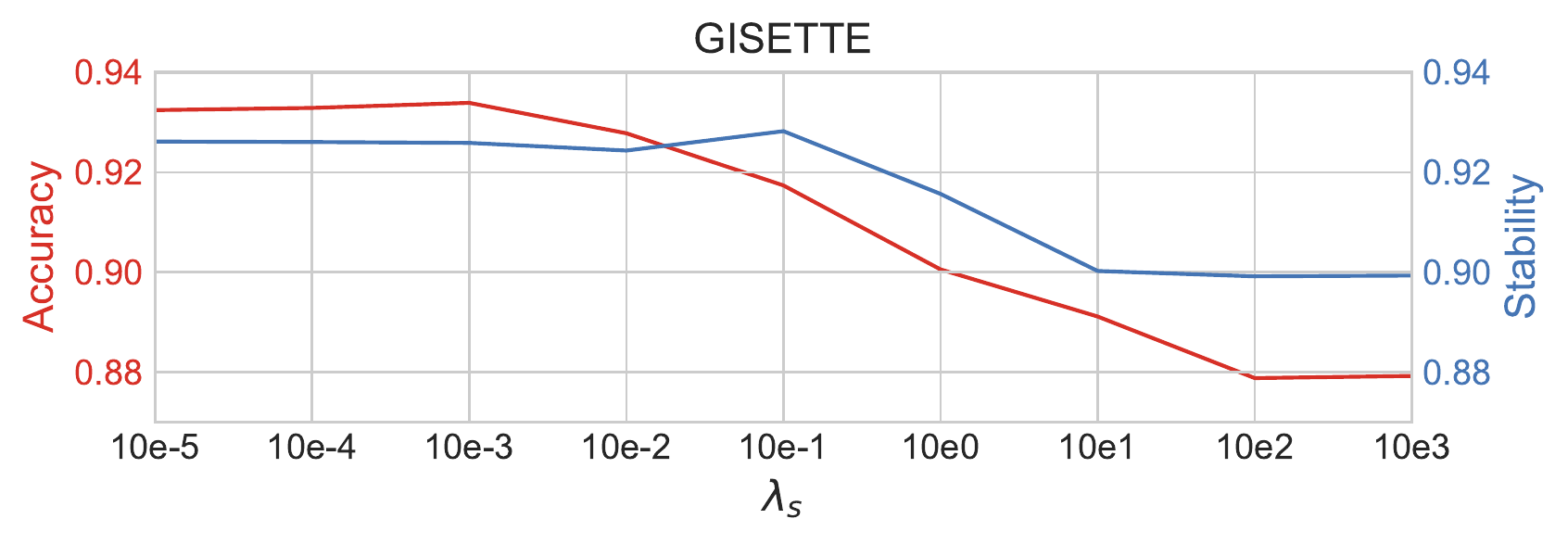}
    \caption{\emph{Choosing the $\lambda_s$ Penalty Term.} The larger $\lambda_s$, the higher we weigh uncertainty against importance when computing feature weights (see Eq. \eqref{eq:weight_objective}). Here, we illustrate the effect of $\lambda_s$ on the average accuracy per time step (red) and the average feature selection stability per time step (blue, according to Eq. \eqref{eq:stability}). We used the Gisette data set for illustration. If we increase $\lambda_s$, we initially also increase the stability. Yet, for $\lambda_s > 1$, the feature selection stability suffers. We observed similar effects for all remaining data sets. This suggests that stable feature sets depend not only on the uncertainty, but also on the importance of features. $\lambda_s=0.01$ was set as our default value throughout the experiments, because it provided the best balance between predictive power and stability in all data sets.}  
    \label{fig:lambda_effect}
\end{figure}

\subsection{\fires~ Hyperparameters}
We assumed a standard normal distribution for all initial model parameters $\theta$, i.e. $\mu_k = 0, \sigma_k=1~\forall k$. The remaining hyperparameters of \fires\ were optimized in a grid search. Below we list the search space and final value of every hyperparameter:
\begin{itemize}
    \item[$\bm{\alpha}:$] Learning rate for updates of $\mu$ and $\sigma$, see Eq. \eqref{eq:gradient_ascent}: search space=$[0.01, 0.025, 0.1, 1, 10]$, final value=$0.01$.
    \item[$\bm{\lambda_s}:$] Penalty factor for uncertainty in the weight objective, see Eq. \eqref{eq:weight_objective}: search space=$[10e-5, 10e-4, 10e-3, 10e-2, 10e-1, 10e0, 10e1, 10e2, 10e3]$, final value=$0.01$. In Figure \ref{fig:lambda_effect}, we exemplary show the effect of different $\lambda_s$ for the Gisette data.
    \item[$\bm{\lambda_r}:$] Regularization factor in the weight objective, see Eq. \eqref{eq:weight_objective}: search space=$[0.01, 0.1, 1]$, final value=$0.01$.
\end{itemize}
There have been additional hyperparameters for the ANN and SDT based models:
\begin{itemize}
    \item \textbf{Learning rate (ANN+SDT):} Learning rate for gradient updates after backpropagation: search space=$[0.01,0.1,1]$, final value=$0.01$.
    \item \textbf{Monte Carlo samples (ANN+SDT):} Number of times we sample from the parameter distribution in order to compute the Monte Carlo approximation of the marginal likelihood: search space=$[3,5,7,9]$, final value=$5$.
    \item \textbf{\#Hidden layers (ANN):} Number of fully connected hidden layers in the ANN: search space=$[3,5,7]$, final value=$3$
    \item \textbf{Hidden layer size (ANN):} Nodes per hidden layer: search space=$[50,100,150,200]$, final value=$100$.
    \item \textbf{Tree depth (SDT):} Maximum depth of the SDT: search space=$[3,5,7]$, final value=$3$.
    \item \textbf{Penalty coefficient (SDT):} \citet{frosst2017distilling} specify a coefficient that is used to regularize the output of every inner node: search space=$[0.001,0.01,0.1]$, final value=$0.01$.
\end{itemize}
Note that we have evaluated every possible combination of hyperparameters, choosing the values that maximized the tradeoff between predictive power and stable feature sets. Similar to related work, we selected the search spaces empirically. The values listed above correspond to the default hyperparameters that we have used throughout the experiments.

\section{Data Sets and Preprocessing}
The Usenet data was obtained from \url{http://www.liaad.up.pt/kdus/products/datasets-for-concept-drift}. All remaining data sets are available at the UCI Machine Learning Repository \cite{Dua2019uci}. We used \emph{pandas.factorize()} to encode categorical features. Moreover, we normalized all features into a range $[0,1]$, using the \emph{MinMaxScaler()} of \emph{scipy}. Otherwise, we did not preprocess the data.

\section{Pseudo Code}
Algorithm \ref{alg:fires} depicts the pseudo code for the computation of feature weights at time step $t$. Note that the gradient of the log-likelihood might need to be approximated, depending on the underlying predictive model. In the main paper, we show how to use Monte Carlo approximation to compute the gradient for an Artificial Neural Net (ANN) and a Soft Decision Tree (SDT).

\begin{algorithm}
\caption{Feature weighting with \fires~ at time step $t$}
\label{alg:fires}
    \KwData{Observations $x_t \in \mathbb{R}^{B \times J}$ and corresponding labels $y_t = [y_{t1},..,y_{tB}]$ ($B$ = batch size, $J$ = no. of features); 
    Sufficient statistics of $K$ model parameters from the previous time step: $\mu_{t-1}, \sigma_{t-1} \in \mathbb{R}^K$}
    \KwResult{Feature weights: $\omega_t \in \mathbb{R}^J$; Updated statistics: $\mu_t, \sigma_t \in \mathbb{R}^K$}
    \Begin{
        \tcc{Define the log-likelihood $\mathcal{L}$ for some base model and compute the gradient}
        $\nabla_\mu \mathcal{L} \leftarrow$ Eq. \eqref{eq:gradient_likelihood}\;
        $\nabla_\sigma \mathcal{L} \leftarrow$ Eq. \eqref{eq:gradient_likelihood}\;
        \BlankLine
        \tcc{Update the sufficient statistics}
        $\mu_t = \mu_{t-1} + \alpha_\mu \nabla_\mu \mathcal{L}$\;
        $\sigma_t = \sigma_{t-1} + \alpha_\sigma \nabla_\sigma \mathcal{L}$\;
        \BlankLine
        \If{$\#parameters~K > \#input~features~J$}{
            $\mu'_t, \sigma'_t \in \mathbb{R}^J \leftarrow aggregate(\mu_t, \sigma_t)$\;
        }
        \BlankLine
        \tcc{Compute feature weights}
        $\omega_t \leftarrow$ Eq. \eqref{eq:omega}\;
    }
\end{algorithm}

\end{document}